\newcommand{\pr}{\mbox{\sf P}}
\newcommand{\ex}{{\bf\sf E}}               
\newcommand{\R}{\mathbb{R}}
\newcommand{\bbS}{{\mathbb S}}   
\newcommand{\calm}{{\mathcal M}}
\newcommand{\calb}{{\mathcal B}}
\newcommand{\calp}{{\mathcal P}}
\newcommand{\calq}{{\mathcal Q}}
\newcommand{\calv}{{\mathcal V}}
\newcommand{\ra}{\rightarrow}           
\def\R{\mathbb{R}}
\def\ex{\mathbb{E}}
\newcommand{\Real}{\mathbb R}
\newcommand{\al}{\alpha}                
\newcommand{\g}{\lambda}                
\newcommand{\bmr}{\boldsymbol r}                              \newcommand{\bmu}{\boldsymbol\mu}                
\newtheorem{definition}{Definition}
\newtheorem{proposition}{Proposition}
\newtheorem{theorem}{Theorem}
\newtheorem{lemma}{Lemma}[section]
\newtheorem{remark}{Remark}[section]
\newtheorem{asm}{Assumption}
\newtheorem{cor}{Corollary}
\begin{document}

\title{On Representations of Mean-Field 
Variational Inference}

\author{Soumyadip Ghosh}
\author{Yingdong Lu}
\author{Tomasz Nowicki}
\address{IBM T.J. Watson Research Center}
\email{\{ghoshs,yingdong,tnowicki\}@us.ibm.com}

\author{Edith Zhang}
\address{Columbia University}
\email{ejz2120@columbia.edu}

\address{ } 



\begin{abstract}
The mean field variational inference (MFVI) formulation restricts the general Bayesian inference problem to the subspace of product measures.
We present a framework to analyse MFVI algorithms, which is inspired by a similar development for general variational Bayesian formulations. Our approach enables the MFVI problem to be represented in three different manners: a gradient flow on Wasserstein space, a system of Fokker-Planck-like equations and a diffusion process. Rigorous guarantees are established to show that a time-discretized implementation of the coordinate ascent variational inference algorithm in the product Wasserstein space of measures yields a gradient flow in the limit. A similar result is obtained for their associated densities, with the limit being given by a quasi-linear partial differential equation. A popular class of practical algorithms falls in this framework, which provides tools to establish convergence. We hope this framework could be used to guarantee convergence of algorithms in a variety of approaches, old and new,  to solve variational inference problems.
\end{abstract}

\maketitle

\section{Introduction}
\label{sec:intro}

Bayesian analysis posits a statistical model with observable variables $\pmb{x}\in\Real^n$ and unobserved latent variables $\theta \in \Real^d$ and seeks to infer a posterior distribution $p(\theta|\pmb{x})$ for the latent $\theta$ given a dataset of observations $\pmb{x} = ( x_1, \dots, x_n )$. The answer is provided, in the abstract, by Bayes' theorem: $p(\theta|\pmb{x}) = \pi(\theta) \,\, \pr(\pmb{x}|\theta)/Z$ where $\pr(\pmb{x}|\theta)$ represents the conditional probability of observations $\pmb{x}$ given $\theta$, $\pi(\theta)$ is a pre-specified prior distribution on $\theta$ and the normalizing constant $Z=\int \pi(\zeta) \pr(\pmb{x}|\zeta)\, d\zeta$ is the (unconditioned) probability of observing $\pmb{x}$. 
%
%
Computing the denominator $Z$ is often prohibitively expensive (it is a $\sharp P$-complete problem even in some special  cases, see, e.g.~\cite{mitzenmacher}), and so an exact computation of the desired posterior distribution $p$ directly from Bayes' rule is intractable. 
Various algorithms have been proposed to overcome this difficulty in practice. These include sampling algorithms such as Markov chain Monte Carlo (MCMC) methods~\cite{hastings70} that aim to estimate the true posterior $p$, but are challenged in practice by the possibility of long initialization periods that are discarded and the hardness of determining effective stopping criteria. \textit{Variational Inference (VI)} ~\cite{bleivi} algorithms on  the other hand can be efficiently implemented to quickly identify approximations of $p$ that are restricted to computationally advantageous forms. Each such VI approach comes with varying degrees of theoretical guarantees for convergence. 
In this article, we focus on the rigorous analysis of convergence of a subset called \textit{Mean Field VI (MFVI)}, a commonly implemented practical VI approach. 

The posterior distribution $p$ is trivially re-expressed as the  minimizer of the Kullback-Leibler (KL) divergence $D$ to itself, where $D(\xi\|\eta) := \ex_{\xi}[\log (d \xi/ d\eta)]$ for measures $\xi$ and $\eta$. Denoting $\pr(\theta, \pmb{x}) := \pi(\theta)\pr(\pmb{x}|\theta)$, we have 
\begin{align}
    p &= \arg\min_{\nu \in \calp(\R^d)} D(\nu\|p)    \label{opt:elbo}
    \\&= \arg\min_{\nu \in \calp(\R^d)}  \left\{ 
    \ex_\nu[\log\nu] - \ex_\nu [\log \pr(\pmb{x},\theta)] \right\} +\log Z.  
    \nonumber
\end{align}
Here, the set $\calp(\mathbb{R}^d)$ contains  absolutely continuous probability measures. The optimization problem~\eqref{opt:elbo} over the probability space is known as the \textit{Variational Bayes (VB)} form of Bayes' rule~\cite{bleivi}. Denote as $H(\nu) := -\ex_{\nu} [\log \nu]$ the entropy of the measure $\nu$, and $\Psi(\nu) := \ex_{\nu}[ - \log \pr(\pmb{x},\theta)] $ the expected negative log likelihood of the joint distribution $\pr(\pmb{x},\theta)$. Since $\log Z$ is a constant w.r.t. $\nu$, the VB~\ref{opt:elbo} minimizes the \textit{evidence lower bound (ELBO)}~\cite{bleivi} objective $J(\nu):=\Psi(\nu)-H(\nu) $. Equivalently, it maximizes $-J(\nu)$, balancing a high log likelihood $\Psi(\nu)$  under $\nu$ with a regularization term that desires a high entropy solution $\nu$. ~\cite{trillos20alonso} provide equivalent functional representations of the objective of~\eqref{opt:elbo} that arise from other perspectives. 


Existence, uniqueness and convergence results for VB can be obtained from representations of~\eqref{opt:elbo} constructed by exploiting  intriguing connections between Bayesian inference, differential equations and diffusion processes. \cite{jko98} provided a seminal result that the gradient flow in Wasserstein space (the metric space $\calp(\Real^d)$ of probability measures endowed with $2$-Wasserstein distance $W_2$) of an objective function like~\eqref{opt:elbo} can be equivalently expressed as the solution to a Fokker-Planck (FPE) equation, which is a parabolic partial differential equation (PDE) on densities as $L_1$ functions. 
These key connections allow Bayes' rule to be expressed as minimum of various related functionals on different metric spaces: it can be viewed as the stationary solution of a gradient flow of $J$ in the space $W_2$, as the stationary solution to an FPE in the $L_1$ space of density functions, and also corresponds to the stationary distribution of a diffusion process. These equivalent relationships have been depicted in Fig.~\ref{fig:perspective}; see~\cite{trillos20alonso} for further details. 

Solution procedures for the several equivalent optimization representations to obtain the posterior $p$ that are shown in Fig.~\ref{fig:perspective} are in practice hard to  
implement since each still requires computationally difficult operations in functional and probability spaces. In practice, the  VB problem~\eqref{opt:elbo} is approximated by \textit{Variational Inference} procedures that replace the general set $ \calp(\R^d)$ with a constrained subset of feasible probability measures $\calq \subset \calp$ where measures in $\calq$ possess structural properties that allow for practical and efficient implementation of the optimization. The solution thus obtained is an approximation of $p$, and will coincide only if $p\in\calq$. A common choice is the mean field VI \cite{bleivi} where $\calq$ is taken to be the mean field family $\calq(\R^d) := \prod_{i=1}^d \calp(\R)$ where the components of $\theta$ are independent of each other. The MFVI approximation of $p$ is then obtained by solving the optimization problem~\eqref{opt:elbo} over the restricted feasible set $ \nu \in \calq$.

\textbf{Contributions:} Our main focus is to derive multiple representations for the MFVI formulation similar to those displayed in Fig.~\ref{fig:perspective}. Our analogous representations are recounted concisely in Fig.~\ref{fig:perspective_MFVI}. 
Specifically: 
\begin{itemize}
\item
Broadly following the alternative views available for Bayesian inference, we describe three different representations of the MFVI algorithm. The first views the mean-field approximation of the posterior as the gradient flow of a joint set of functionals, the second as a solution to a system of quasilinear partial differential equations and the last as a diffusion process that is the stationary distribution of a system of stochastic differential equations. 
\item Theorem~\ref{pro:gradientflow} 
shows that a discrete process induced by the candidate solutions of a coordinate-wise algorithm (see Sec~\ref{sec:mfvi_formulation}) converges to an equivalent gradient flow defined on the product Wasserstein space of  measures when a certain step size parameter is shrunk to zero. This is to the best of our knowledge the first gradient flow representation of the general MFVI algorithm, and it depends on extensions of some basic concepts of gradient flows to product Wasserstein space, which are presented in Sec. \ref{sec:gradient_flow} and Sec. \ref{sec:gradient_flow_appendix}. 
\item
We also demonstrate that the corresponding density functions converge to the solution of a second order quasilinear evolutional (parabolic) equation in Corollary~\ref{cor:density_convergence}.  Additionally, in Theorem~\ref{thm:pde_uniqueness}, we extend our analysis to present new results of independent interest on existence and uniqueness of solutions to families of quasilinear evolutional equations that satisfy similar conditions. 
\item The quasilinear evolutional equation leads to the probabilistic representation of the MFVI by connecting its solution to the density of a stochastic process that is the solution to a corresponding stochastic differential equation (SDE) of Mckean-Vlasov type. 
\end{itemize}


The three representations presented in  this article open the possibility of multiple new algorithmic approaches to obtaining the approximation to $p$ in the space $\calq$, and also provides tools to study the convergence properties of these algorithms. While a detailed development is out of scope here, we briefly summarize some possibilities. The MFVI formulation~\eqref{opt:elbo} can be solved using a system of SGD-like iterations produced by Euler-discretization formulations~\eqref{opt:vi-euler}, each of which can be solved explicitly for further restrictions of the marginals measures to parametric families such as Gaussian, mixed-Gaussian etc. Alternately, non-parametric particle-based heuristics can be used to approximate the solution to the general SGD steps~\eqref{opt:vi-euler}. The SDE representation on  the other hand suggests that the posterior be approximated by estimating the stationary process of the SDE by exploiting techniques from the vast literature on SDEs. A particle filter based approach can for example be constructed using a system of MCMCs with dynamics arising from the components of the SDE.


\textbf{Prior Work:} Convergence analysis of the MFVI approximation to the VB problem is relatively less well established. \cite{wang19blei} provide consistency results for MFVI procedures by  establishing that point estimates of the latent variables $\theta$ (such as expectations of functions of $\theta$) constructed using MFVI estimates of the posterior converge to the true value asymptotically as the size $n$ of $\pmb{x}$ grows under the assumption that the true latent variable takes a definite value. A recent analysis by~\cite{FBach_VI} presents a convergence analysis of VI where the set $\calq$ are further constrained to be (mixtures of) Gaussian distributions, thus operating in the sub-manifold of $\calp(\Real^d)$ known as the Bures-Wasserstein manifold. 
Their methodology closely follows the standard VB analysis outlined in Fig.~\ref{fig:perspective} restricted to this manifold. In particular, the formulation~\eqref{opt:elbo} in this case leads to a simplified FPE equation and associated diffusion process, unlike our case which requires the development and analysis of a system of quasi-linear PDEs and associated stochastic processes.    


\textbf{Organization:} The rest of the paper will be organized as follows: in Sec.~\ref{sec:bayes}, we provide precise definitions of various key representations of Bayesian inference as illustrated in Fig.~\ref{fig:perspective}; Sec.~\ref{sec:mfvi_formulation} defines the optimization formulation of the MFVI problem, including an Euler discretization scheme which forms the basis of all the development that follows; 
in Sec.~\ref{sec:gf_MFVI}, we present our results on the convergence of the discrete scheme to gradient flow; in Sec.~\ref{sec:pde}, we define the equivalent quasilinear parabolic equation in the $L_2$ space of densities and discuss its well-posedness, as well as the probabilistic representation in the form of a Mckean-Vlasov stochastic differential equation.

\section{Representations of the Bayesian Posterior}
\label{sec:bayes}

The variational formulation ~\eqref{opt:elbo} of Bayesian inference enables a fruitful exploration of connection between the algorithms such as gradient descent and gradient flows, as well as its implications. This section presents a brief overview of three equivalent representations of the VB formulation that yield different 
characterisations 
of the posterior distribution, as summarized in 
Figure \ref{fig:perspective}, each of which lead to potential algorithmic approaches to approximate it. In Sec. \ref{sec:gf_MFVI} \&~\ref{sec:pde}, a similar set of relationship will be established for MFVI. For that purpose, we will provide precise definitions and descriptions of these characterizations in this section.  

\begin{figure}
	\centering
	\includegraphics[width=0.95\linewidth]
	{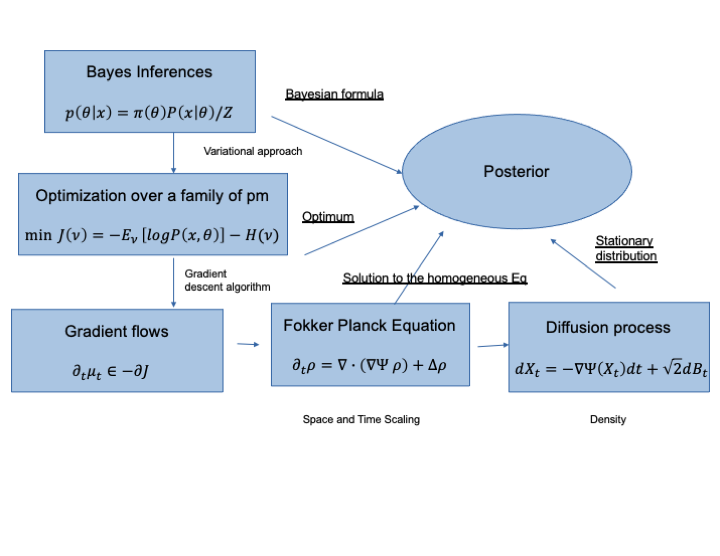}
	\vskip -0.5in
	\caption{Representations of Bayesian inference} 
	\label{fig:perspective}
\end{figure}

In the classic Euclidean space setting, a curve $x(t)$ in $\Real^d$ is called the gradient flow for some function $E:\Real^d \ra \Real$ if it solves the following equation
\begin{equation}
    \partial_t x(t) = -\nabla E(x(t)).
    \label{eq: gradflow}
\end{equation}
To extend this concept to general metric space (Wasserstein spaces included), consider a general (energy) functional $E:X\ra \Real$ defined on a metric space $(X,d)$, its gradient flow $x(t): \Real_+ \ra X$ solves the following {\it energy dissipation equation}, for $t>0$,
\begin{align}
E(x_0) \,=\, E(x_t) + \frac12 \int_0^t |{\dot x}((r)|^2 dr 
+ \frac12 \int_0^t |\nabla E(x(r))|^2 dr,\label{eqn:Energy_Dissipation}
\end{align}
where 
${\dot x}(t) :=\lim_{s\ra t} \frac{d(x(t), x(s))}{|s-t|}$, $|\nabla E|(x) :=\lim\sup_{y\ra x} \frac{(E(y)-E(x))^+}{d(x,y)}.$
While \eqref{eqn:Energy_Dissipation} provides a concise form of the essence of the gradient flow (see, e.g.~\cite{trillos20alonso}), more extensive characterization and discussion on the topic can be found in~\cite{ambrosio2006gradient}. 

The gradient flow of the VB problem~\eqref{opt:elbo},
as summarized in \cite{trillos20alonso}, 
is similarly derived by treating $J(\nu)$ as a functional on the Wasserstein space of probability measures $\nu$. Thus, the posterior distribution which minimizes the function $J(\nu)$ can be viewed naturally as the limit of the gradient flow on the Wasserstein space defined by $J(\nu)$.
Specifically, $J(\nu)$ is defined over the metric space of measures $(\calp^2(\mathbb{R}^d), W_2)$, where $\calp^2(\mathbb{R}^d)$ is the space of square-integrable probability measures on $\Real^d$:
\begin{align*}
\calp^2(\mathbb{R}^d)=\left\{\nu \in \calp(\mathbb{R}^d): \exists x_0, \int_{\Real^d} ||x-x_0||_2^2 d\nu(x)<\infty  \right\}.
\end{align*}
 The Wasserstein distance $W_2$ is defined, for $\nu_1, \nu_2 \in \calp(\mathbb{R}^d)$, via its squared value
\begin{align*}
W_2(\nu_1, \nu_2)^2 :=\inf_\al \int_{\Real^d \times \Real^d} ||x-y||_2^2 d\al(x,y). 
\end{align*}
with the infimum taken over all joint measures $\al \in \calp(\Real^d \times \Real^d)$ with marginals $\nu_1$ and $\nu_2$ on the first and second factors. 

The uniqueness of the gradient flow as well as its rate of convergence is established by the convexity properties of the functional $J(\nu)$. 
The rate of convergence is related, in~\cite{trillos20alonso}, to the $\g$-geodesic convexity (see Definition~\ref{defn:convexity} below)
of $J(\nu)$ for some $\g\in \Real$, and in
Prop.~3.1 they identify the necessary and sufficient conditions for $J$ to satisfy this convexity condition.
In the case of our Bayesian inference, where the underlying space is the finite dimensional Euclidean space and the reference measure is the Lebesgue measure, the condition is reduced to that the integrand of $\Psi$ is $\g$-convex (see Defn.~\ref{defn:l-convex-fn}). 

A straightforward procedure to access the gradient flow to the posterior is to follow a discretization over the parameter $t$. For instance, the gradient flow $x(t)$ in the Euclidean space can be discretized by the 
Euler scheme with step size $h>0$
\begin{equation}
    x^{k+1} = \arg\min_x \left\{\frac12 ||x - x^k||^2 + hE(x)\right\},
    \label{eq: euclidean euler}
\end{equation}
where the solution 
$x^{k+1} = x^k - h \nabla E(x^k)$ is a discretization of~\eqref{eq: gradflow}.
Note that the objective of the iterative Euler discretization scheme \eqref{eq: euclidean euler} does not include a gradient term. An analogous scheme to access the VB gradient flow thus only requires evaluating $J(\nu)$ and a $W_2$ distance term:
\begin{equation}
    \nu^{k+1} = \arg\min_{\nu} \left\{\frac12 W_2(\nu, \nu^k)^2 + h J(\nu)\right\}.
    \label{eq: metric euler}
\end{equation} 

\cite{jko98} establish a connection between gradient flows under $J(\nu)$ in $\calp_2$ and the Fokker-Planck partial differential equation satisfied by the densities of $\nu$. We denote by $\rho(\theta) : \Real^d \ra \Real_{+}$ a non-negative square integrable function as the Radon-Nikodym density of measure $\nu$ w.r.t. the Lebesgue reference measure. Re-write the functional $J$ as $J(\rho) = \Psi(\rho) - H(\rho)$, where $H(\rho) = -\int \rho\,\log \rho\, d\theta$ and $\Psi(\rho) =  - \int \log \pr(\pmb{x},\theta) \,\rho\, d\theta$. \cite{jko98} study the  Fokker-Planck equation over a collection $\{\rho(t)\}_{t\in\Real_{+}}$ 
\begin{align}
    (\text{FPE}): 
    \begin{cases}
    \partial_t \rho (t) = \nabla \cdot (\nabla \Psi(\rho(t)) ) + \Delta \rho(t)\\
    \rho(0) = \rho^0
    \end{cases}, \label{eq:fpe}
\end{align}
where the derivative $\nabla$ and the Laplacian $\Delta$ are over $\theta$.
(FPE) is a partial differential equation that governs the evolution (w.r.t $t$) of the probability density associated with a particle undergoing diffusion in a potential field $\Psi$.
\cite{jko98} 
state that solution of (FPE) is equivalent to the gradient flow of $J(\nu)$ in $(\calp_2(\Real^d),W_2)$. This is established by analysing the densities $\rho^{k}$ associated with the solutions $\nu^k$ of the gradient flow discretizing iterations in~\eqref{eq: metric euler}. Define the interpolation $\rho_{h} : (0,\infty) \times \R^d \rightarrow \Real_+^d$ by $\rho_{h}(t) = \rho_{h}^k$ for $t \in [kh, (k+1)h)$ and $k \geq 1, i=1,2,\ldots, d$. 
Their main theorem states: 
\vskip 0.2cm

\noindent
{\bf Theorem }[Theorem 5.1 from~\cite{jko98}]
\emph{Suppose that $\rho_h(t)$ is the continuous interpolation of solutions $\rho^k$ to \eqref{eq: metric euler} for a fixed $h$. Then as $h\ra 0$, $\rho_h(t) \ra \rho(t)$ weakly in $L^1(\Real^d)$ where $\rho(t)$ is the unique solution of \eqref{eq:fpe}. }
\vskip 0.2cm


The density satisfying the partial differential equation (FPE) also coincides with the density of a diffusion process with generator derived from the differential operator of (FPE). This further means that the homogeneous solution to  (FPE) is the same as the stationary distribution of the diffusion process, all being equal to (the density of) the desired posterior distribution. Indeed, the diffusion interpretation gives rise to the possibility of using MCMC techniques to approximate the posterior via appropriate discretizations of the diffusion~\cite{trillos20alonso}. A more detailed description of this relationship can be found in \ref{sec:diffusion} of the Appendix.






\section{MFVI Formulation} \label{sec:mfvi_formulation}
The MFVI approximation of the posterior solves:
\begin{align}
\label{eqn:MFVI_formulation}
\min_{\nu\in \calq(\R^d)=\prod_{i=1}^d \calp^2_i(\R)} J(\nu),
\end{align}
with $J(\nu)$ treated as a functional of the product space of measures $ \calq(\R^d)=\prod_{i=1}^d \calp^2_i(\R)$.
This is encoded via the constraint $\nu(\theta) = \prod_{i=1}^d \nu_i(\theta_i)$. Thus, $J(\nu)$ in~\eqref{opt:elbo} takes the form: 

\begin{align}
J(\nu)
=& -\int_{\R^d} \log \pr(x,\theta) \prod_{i=1}^d\nu_i(\theta_i)d\theta_i \nonumber  + \int_{\R^d} \sum_{i=1}^d \log \nu_i(\theta_i) \prod_{j=1}^d\nu_j(\theta_j)d\theta_j
 \nonumber\\
= & -\int_{\R^d} \log \pr(\pmb{x},\theta) \prod_{i=1}^d\nu_i(\theta_i)d\theta_i - \sum_{i=1}^d H(\nu_i) \label{eq: j star}
\end{align}
where 
$H(\nu_i) = -\int_\R \nu_i \log \nu_i d\theta_i$ is the entropy of the $i$-th component. 
%
Introduce 
\begin{align*}
\nu_{-i} := & \prod_{j\neq i} \nu_j,
\\
\Psi_i(\theta;\nu_{-i}) := & -\int_{\R^{d-1}} \log \pr(\pmb{x},\theta) \prod_{j\neq i} \nu_j(\theta_j)d\theta_j
\\= & \;\;\ex_{-i} [-\log \pr(\pmb{x},\theta)].
\end{align*}
This notation now lets us denote  
\begin{align*}J_i(\nu_i\,;\,\nu_{-i}) := 
\int \Psi_i(\theta;\nu_{-i}) \nu_i d\theta_i - H(\nu_i)
\label{eqn:def J-i}
\end{align*}
as the objective function restricted to $\nu_i$ with the other components $\nu_{-i}$ held fixed. It is apparent that, given $\nu_{-i}$, $J_i(\nu_i\,;\,\nu_{-i})$ is a functional on  $\calp_i^2(\R)$ for each $i=1,\ldots, d$. Furthermore, the following result is obtained in a straightforward manner.
\begin{proposition}
\label{pro:vi_system}
Suppose that probability measure $\nu^{\ast}\in\calq=\prod_{i=1}^d \nu_i^*\in \calq$ with $\nu^*_i\in \calp_i^2(\R), i=1, \ldots, d$
satisfying the following system of implicit equations 
\begin{equation}\label{eq:vi_system}
\nu^*_i = \arg\min_{\nu_i\in\calp(\Real)}  J_i(\nu_i\,;\,\nu^*_{-i}),\quad \forall i=1,\ldots,d.
\end{equation}
Then, $\nu^{\ast}$ is a solution to the optimization problem \eqref{eqn:MFVI_formulation}. 
\end{proposition}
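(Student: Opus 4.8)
The plan is to reduce the full objective coordinate by coordinate to the restricted functionals $J_i$, so that the fixed-point system \eqref{eq:vi_system} is recognised as the blockwise optimality conditions for \eqref{eqn:MFVI_formulation}. The whole argument rests on a single algebraic identity, so the bulk of the work is bookkeeping plus one honest caveat at the end.

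First I would establish the central identity: for each index $i$ and each fixed choice of the remaining marginals $\nu_{-i}$,
\begin{equation*}
J(\nu) = J_i(\nu_i\,;\,\nu_{-i}) - \sum_{k\neq i} H(\nu_k),
\end{equation*}
that is, $J(\nu)$ equals $J_i(\nu_i;\nu_{-i})$ plus a remainder independent of $\nu_i$. This follows by applying Fubini's theorem to the likelihood term in \eqref{eq: j star}, integrating out the $d-1$ coordinates $\theta_j$, $j\neq i$, to produce $\int_\R \Psi_i(\theta;\nu_{-i})\,\nu_i\,d\theta_i$, together with the additivity $\sum_k H(\nu_k) = H(\nu_i) + \sum_{k\neq i}H(\nu_k)$ of the entropy over the product structure. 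The only nontrivial hypothesis is integrability of $\log\pr(\pmb{x},\theta)$ against $\prod_k \nu_k$, which I would record as a standing assumption so that Fubini applies and $\Psi_i$ is well defined.

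Given the identity, fixing $\nu_{-i}=\nu_{-i}^*$ shows that $\nu_i\mapsto J(\nu_i,\nu_{-i}^*)$ and $\nu_i\mapsto J_i(\nu_i;\nu_{-i}^*)$ differ by the constant $-\sum_{k\neq i}H(\nu_k^*)$. Hence $\nu_i^*$ solves \eqref{eq:vi_system} if and only if it minimises $J$ among all feasible perturbations that change only the $i$-th marginal, which is exactly the stated direction (system $\Rightarrow$ solution), read blockwise. Each subproblem is moreover convex, since the likelihood term is linear in $\nu_i$ while $-H(\nu_i)=\int \nu_i\log\nu_i$ is strictly convex; the minimiser is therefore unique and takes the Gibbs form $\nu_i^*\propto \exp\!\left(-\Psi_i(\cdot\,;\nu_{-i}^*)\right)$, recovering the familiar CAVI update.

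The genuine obstacle is passing from this blockwise optimality to being a solution of \eqref{eqn:MFVI_formulation} in the global sense. The likelihood term is \emph{multilinear} in $(\nu_1,\dots,\nu_d)$ rather than jointly convex, and the feasible set $\calq=\prod_i\calp_i^2(\R)$ is not convex (a mixture of product measures is generally not a product), so coordinate minimality need not imply global minimality. A telescoping deformation of an arbitrary competitor $\mu=\prod_i\mu_i$ into $\nu^*$ one coordinate at a time fails precisely because each step tests $\nu_i^*$ against a \emph{mixed} conditioning $(\mu_1,\dots,\mu_{i-1},\nu_{i+1}^*,\dots,\nu_d^*)$ rather than against $\nu_{-i}^*$. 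I therefore expect two honest routes: either interpret ``solution'' as a stationary (candidate) point, which is all the subsequent gradient-flow construction actually uses, so that the identity plus convexity of the subproblems suffices; or, for genuine global optimality, impose a joint geodesic convexity condition on $J$ of the kind invoked in Section~\ref{sec:bayes} and combine it with the blockwise first-order conditions. I would present the proof via the first route and explicitly flag the convexity hypothesis required for the stronger reading.
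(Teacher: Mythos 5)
The paper offers no proof of this proposition: it is asserted to follow ``in a straightforward manner'' and neither the main text nor the appendix contains an argument for it, so there is nothing to compare your proposal against except that assertion. Your decomposition $J(\nu)=J_i(\nu_i;\nu_{-i})-\sum_{k\neq i}H(\nu_k)$ is exactly the computation the authors are implicitly relying on: it follows from Fubini applied to \eqref{eq: j star}, and it shows that the system \eqref{eq:vi_system} is precisely the statement that $\nu^*$ is a coordinate-wise (blockwise) minimizer of $J$ over $\calq$; up to that point your argument is correct and complete, modulo the integrability hypothesis you rightly record so that $\Psi_i$ is well defined. Your closing caveat is not a quibble but the substantive point: the likelihood term in \eqref{eq: j star} is multilinear in $(\nu_1,\dots,\nu_d)$, the product family is not a convex subset of $\calp(\R^d)$, and blockwise minimality does not in general imply global minimality. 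As literally stated --- a fixed point of \eqref{eq:vi_system} is ``a solution to \eqref{eqn:MFVI_formulation}'' --- the proposition is therefore not established by this (or any straightforward) argument, and it is false if ``solution'' means global minimizer, since CAVI fixed points can be non-global stationary points of the MFVI objective. Your proposed repair --- either read ``solution'' as a stationary point, which is all the subsequent gradient-flow development actually uses, or add a joint convexity hypothesis of the kind invoked for Theorem~\ref{pro:gradientflow} --- is the right one; on this point your write-up is more careful than the paper.
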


Proposition \ref{pro:vi_system} motivates the common practice of solving for the MFVI solution $\nu^{\ast}$ by iteratively solving each of the $\nu^{\ast}_i$ holding the others fixed, and cycling through the components until convergence. This is called the \emph{coordinate ascent variational inference (CAVI)} framework in the VI literature ~\cite{bishop06} when the optimization problem is expressed as that of maximizing $-J(\nu)$.
So, CAVI generates solutions from the sequence of problems:
\begin{equation}\label{eq:vi_system-peri}
\nu^{k}_i = \arg\min_{\nu_i\in\calp(\Real)}  J_i(\nu_i\,;\,\nu^{k-1}_{-i}),\quad \forall i=1,\ldots,d,
\end{equation}
with $\nu^{k-1}_{-i}:= \prod_{j=1}^{i-1} \nu^{k}_j \prod_{j=i+1}^{d} \nu^{k-1}_j$ for all $i=1,\ldots, d$. 
With mild regularity assumptions, the limit as $k\nearrow\infty$ of the product measure converges to a solution to~\eqref{eq:vi_system}. 

While the CAVI framework assumes that~\eqref{eq:vi_system-peri} can be solved to optimality within each iteration, this can be inexpensively implemented 
only in specific instances where for example the feasible set $\calp(\Real)$ for each component is further restricted to have an advantageous parametric form; see~\cite{bleivi}. In the general space of measures $\calp(\Real)$, solving each optimization problem  \eqref{eq:vi_system-peri} to optimality poses significant computational challenges. The same fundamental issue arises here as that in implementing~\eqref{opt:elbo} in that the gradient of $J(\nu)$ w.r.t. $\nu$ needs to be defined over an appropriate distance metric in the measure space $\calp(\Real)$. We will follow the discretization approach defined earlier in~\eqref{eq: metric euler} in the study of gradient flows (see e.g.~\cite{jko98,ambrosio2006gradient}) and generate a sequence of solutions $\nu^{k}_h \in \left(\calp^2(\Real),  W_2\right)$ 
that each solves:
\begin{align}
    \nu_{h,i}^{k} = \arg\min_{\nu \in \calp(\R)} \left\{ V_i(\nu\,;\,\nu^{k-1}_h)\right\}, 
    \label{opt:vi-euler}
\end{align}
where $V_i(\nu\,;\,\nu^{k-1}_h)\;:=\;\frac 1 2 W_2(\nu^{k-1}_{h,i}, \nu)^2 + h\, J_i(\nu\,;\,\nu^{k-1}_{h,-i})$.


In the next section, we present a set of results that provide different representations of MFVI, including Sec.~\ref{sec:gf_MFVI} on gradient flow and Sec.~\ref{sec:pde} on PDE and diffusion processes. Each case presents a distinct view on the analysis of convergence of the iterates $\nu^{k}_{h,i}$. Note that the PDE representation we will analyse their convergence using the associated densities $\rho^{k}_{h,i}$, where the objectives defining optimization formulations~\eqref{eq:vi_system-peri} and~\eqref{opt:vi-euler} are represented as functionals of $\rho$. 

\section{MFVI: Gradient Flow Representation}
%
%
\label{sec:gf_MFVI}

A natural question arises regarding whether the sequence of iterates generated by solving \eqref{opt:vi-euler} contain a limit as the step size $h$ shrinks,
and if so, how to characterize the limit. These problems have been investigated extensively for the case in which the VB functional is studied on the Wasserstein space of probability measures; see e.g.~\cite{jko98, ambrosio2006gradient, trillos20alonso}, as well as their follow-ups. In essence, for this case, convergence is establish to a gradient flow under $J(\nu)$ as $h\searrow$.

In the case of MFVI, an analogous gradient flow can be defined on the product space of the component Wasserstein spaces. 
We will first establish the tightness of sequence of measures produced by \eqref{opt:vi-euler}, which via Prokhorov's Theorem~\cite{BillingsleyBook} guarantees the existence of the convergence subsequences. 
%
%
%
Analytically, tightness of the family of probability measures under consideration is equivalent to establishing that the family is sequentially compact. 
\begin{definition}[{\bf Tightness}]
A family $\calm$ of probability measures  on $X$ is tight if $\forall \epsilon >0$, $\exists$ compact set $K_\epsilon$ s.t. $\forall \mu\in \calm$, $\mu(X/K_\epsilon)< \epsilon$.  
\end{definition}

The next lemma establishes that algorithm \eqref{opt:vi-euler} produces a tight sequence of measures. Its proof can be found in Sec.~\ref{sec:tightness_proof}.

\begin{lemma}
\label{lem:tightness}
For each fixed step size $h>0$, the family of probability measures $\{\nu_{h,i}^k\}$, or equivalently  their densities $\{\rho_{h,i}^k \}$ are tight.
\end{lemma}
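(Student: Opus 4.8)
The plan is to reduce the claim to a single uniform moment bound and then apply Markov's inequality, exploiting that on $\R$ the sublevel sets $\{|\theta_i|\le R\}$ are compact. Concretely, I would aim to show that for each fixed $h$ there is a constant $M$ with $\int_{\R}\theta_i^2\,d\nu_{h,i}^k \le M$ for all $k$ and all $i$; then $\nu_{h,i}^k(\{|\theta_i|>R\})\le M/R^2$ uniformly, and choosing $R=\sqrt{M/\epsilon}$ produces the compact set required by the definition of tightness. Since the densities $\{\rho_{h,i}^k\}$ are merely the same objects viewed as $L^1$ functions, their tightness is the identical statement and needs nothing extra. Thus the entire argument rests on the uniform second–moment estimate, which I would obtain by a JKO–type a priori energy estimate adapted to the coordinate-wise structure of \eqref{opt:vi-euler}.

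\textbf{Monotone decay of the ELBO.} First I would view the iterations \eqref{opt:vi-euler} as a single sequence of product measures $\mu^0,\mu^1,\dots$ in which each micro-step updates exactly one marginal; note that, as $t$ and $i$ range, the marginals $\mu_i^t$ are precisely the iterates $\nu_{h,i}^k$. From \eqref{eq: j star} and the definition of $J_i$ one has $J(\nu)=J_i(\nu_i\,;\,\nu_{-i})-\sum_{j\neq i}H(\nu_j)$, so when coordinate $i$ is updated the frozen terms $\sum_{j\neq i}H(\nu_j)$ are unchanged and reducing $J_i$ reduces the \emph{full} objective $J$ by the same amount. Using that $\nu_{h,i}^{k}$ minimizes $V_i(\,\cdot\,;\nu_h^{k-1})$ and comparing against the admissible competitor $\nu=\nu_{h,i}^{k-1}$ gives
\begin{equation*}
\tfrac12 W_2(\mu_i^{t-1},\mu_i^{t})^2 + h\,J(\mu^{t}) \;\le\; h\,J(\mu^{t-1}).
\end{equation*}
Dropping the nonnegative $W_2$ term yields $J(\mu^{t})\le J(\mu^{t-1})$, hence $J(\mu^{t})\le J(\mu^{0})$ for every $t$. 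Provided the initialization has finite energy $J(\mu^0)<\infty$ (finite second moment and finite entropy of the starting marginals), the total objective therefore stays uniformly bounded along the scheme.

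\textbf{From the energy bound to moments.} It then remains to convert $J(\mu^t)\le J(\mu^0)=:C_0$ into a bound on $\int\theta_i^2\,d\mu_i^t$, for which I would use two ingredients. First, I would bound the negative entropy from below by comparison with a Gaussian reference $\sigma_s\propto e^{-\theta^2/(2s^2)}$: nonnegativity of relative entropy $D(\mu_i\,\|\,\sigma_s)\ge 0$ gives $-H(\mu_i)=\int \mu_i\log\mu_i\,d\theta_i \ge \int \mu_i\log\sigma_s\,d\theta_i \ge -\tfrac{1}{2s^2}\int\theta_i^2\,d\mu_i - c_s$, where $s$ is at my disposal. Second, I would invoke the growth (coercivity) of the potential: under the standing $\g$-convexity/growth hypothesis on the integrand of $\Psi$, namely $-\log\pr(\pmb{x},\theta)$ (cf. Defn.~\ref{defn:l-convex-fn}), there are constants $a>0$ and $b$ with $-\log\pr(\pmb{x},\theta)\ge \sum_{i}(a\,\theta_i^2-b)$, so by the product structure $-\int\log\pr(\pmb{x},\theta)\prod_i\mu_i^t\,d\theta \ge \sum_i a\int\theta_i^2\,d\mu_i^t - db$. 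Choosing $s$ with $\tfrac{1}{2s^2}<a$ and combining both bounds in $J(\mu^t)=-\int\log\pr\,\prod_i\mu_i^t\,d\theta-\sum_iH(\mu_i^t)$ gives
\begin{equation*}
C_0 \;\ge\; J(\mu^t)\;\ge\;\Big(a-\tfrac{1}{2s^2}\Big)\sum_{i}\int\theta_i^2\,d\mu_i^t\;-\;d(b+c_s),
\end{equation*}
so $\sum_i\int\theta_i^2\,d\mu_i^t \le \big(C_0+d(b+c_s)\big)/(a-\tfrac{1}{2s^2})$ uniformly in $t$, which is exactly the uniform second-moment bound needed above.

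\textbf{Main obstacle.} The delicate point is this last step. The negative-entropy term enters $J$ with a favorable sign, but its only available lower bound $-\tfrac{1}{2s^2}\int\theta^2\,d\mu$ \emph{decreases} with the second moment, so the estimate closes only when the potential is coercive (here quadratically, $a>0$), which is precisely what lets the reference Gaussian be taken flat enough that its entropy penalty is dominated. Making this quantitative, and checking that the coercivity holds \emph{uniformly} over the changing conditioning measures $\nu_{-i}$ entering $\Psi_i$ in \eqref{opt:vi-euler}, is where the model assumptions on $\log\pr(\pmb{x},\theta)$ must be carefully used; by contrast the monotonicity step and the Markov reduction are routine once the energy bound is in hand.
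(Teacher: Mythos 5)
Your proof is correct, but the key step is genuinely different from the paper's. Both arguments reduce tightness to a uniform second-moment bound via Chebyshev, and both start from the same coordinate-wise descent inequality obtained by testing the minimizer of $V_i$ against the previous iterate. From there the paper telescopes the inequalities across coordinates and iterations (using the cyclic identity $J_i(\rho^k_{h,i};\rho^{k-1}_{h,-i})=J_{i+1}(\rho^{k-1}_{h,i+1};\rho^{k-1}_{h,-(i+1)})$, which is the observation your identity $J(\nu)=J_i(\nu_i;\nu_{-i})-\sum_{j\neq i}H(\nu_j)$ repackages) to bound the cumulative displacement $\sum_{k,i}\tfrac12 W_2(\rho^{k-1}_{h,i},\rho^k_{h,i})^2$ by $h$ times the total decrease of $J$, and then converts accumulated $W_2$ movement into a moment bound by the triangle inequality from the initial measure. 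You instead keep only the monotonicity $J(\mu^t)\le J(\mu^0)$ and show that $J$ itself is coercive in the second moment, bounding the negative entropy from below by a Gaussian comparison and the potential from below by $a|\theta|^2-b$ with $a>0$. The trade-off is real: the paper's route needs no growth condition on $-\log\pr(\pmb{x},\theta)$, but passing from $\sum_k W_2^2$ to $\sum_k W_2$ costs a Cauchy--Schwarz factor $\sqrt{N}$, so it only yields moment bounds uniform on finite time horizons $kh\le T$ (which is all that is needed for the interpolation limit, and is what JKO actually prove); your route gives a bound uniform over all $k$, but requires the quadratic coercivity of the integrand, which does follow from the $\g$-convexity with $\g>0$ assumed in Theorem~\ref{pro:gradientflow} and Corollary~\ref{cor:density_convergence} but is not among the stated hypotheses of Lemma~\ref{lem:tightness} itself — you correctly flag this as the delicate point. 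Both arguments also share the implicit requirement that the initialization have finite energy, which the paper states only in the appendix version of Corollary~\ref{cor:density_convergence_complete}.
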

%

So, the sequences contain convergent subsequences. To show that they converge to a common limit, we utilize the uniqueness of gradient flow on product spaces defined by $\g$-convex functionals. For that purpose, we will first introduce some necessary basic concepts such as AC space, tangent, cotangent and the Fr\'echet subdifferential. 

\subsection{Gradient Flows on Product Wasserstein Space}
\label{sec:gradient_flow}

In this section, we will present the concepts and basic properties of gradient flows on a product Wasserstein space, the limiting gradient flows suited for MFVI.
We would like to point out that these concepts and basic properties presented here, as well as preparatory materials on some basic notions presented in Appendix \ref{sec:gradient_flow_appendix}, are natural extensions of those for gradient flows on a Wasserstein space, and can be found in various chapters in~\cite{ambrosio2006gradient}. For simplicity, we assume that each marginal is a measure in space $\calp_2(\Real)$ endowed with the Wasserstein metric $W_2$.
\begin{definition}[{\bf $AC_p$, spaces of absolutely continuous probability measures}]
For real values $a<b$, the space $AC_p(a,b; \prod_{i=1}^d \calp_i^2(\Real))$ consists of maps $v(s)=(v_i(s))_{i=1}^d$, where each component $v_i(s) : (a,b) \ra \calp^2(\Real)$ such that there exists a $\Real^d$-valued $L^p$ function $m(s)=(m_i(s))_{i=1}^d$ satisfying, for each $i=1,\ldots,d$:
\begin{align*}
W_2(v_i(s), v_i(t))&\le \int_s^t m_i(r) dr \;\;\forall a<s\le t<b.
\end{align*}
\end{definition}
For any real number $p>1$, define the product tangent bundle as follows.
\begin{definition}[{\bf Tangent bundle}]
For each $\nu=(\nu_1, \ldots, \nu_d) \in \prod_{i=1}^d \calp^2_i(\Real)$, define, 
\begin{align*}
T_\nu := \prod_{i=1}^d CL_p(\{ (j^i_q(\nabla \phi_i)): \phi=(\phi_1, \ldots, \phi_d) \in C_b^\infty(\Real)\})
\end{align*}
with $C_b^\infty(\Real)$ denotes the space of bounded smooth function on $\Real$, $j^i_q$ denotes the duality map on $L_p(\nu_i)$ ($f\mapsto f^{p/q}$), and $CL_p$ denotes the closure under $L_p$.
\end{definition}
The classic Fr\'echet subdifferential of a functional $\phi$ defined on a Banach space $\calb$, is defined as a subset in the dual space $\calb'$, more specifically, for an element $v\in D(\phi)$, 
\begin{align*}
\xi \in \partial \phi(v) \iff \lim \inf_{w \rightarrow v}
\frac{\phi(w)-[\phi(v)+\langle \xi, w-v\rangle]}{||w-v||_\calb} \ge 0,
\end{align*}
where $\langle \cdot, \cdot \rangle$ represents the duality action, which is reduced to the inner product when $\calb$ is a Hilbert space.

For $\calb$ being $\prod_{i=1}^n \calp_i^2(\Real)$, for the Fr\'echet differential at $\mu$, the dual space is the space of $L_2$ functions with $\mu$, and the displacement $w-v$ will be replaced an optimal transport plan between the two, as introduced in Ch. 10 of~\cite{ambrosio2006gradient}. To distinguish it from the classic Fr\'echet subdifferential, we name it Fr\'echet-Wasserstein subdifferential, and denoted as $\partial \phi(\mu)$. For any $\xi\in L^2(\mu)$, $\xi \in \partial \phi(\mu)$ if 
\begin{align*}
\lim \inf_{\nu \rightarrow \mu}
\frac{\phi(\nu)-[\phi(\mu)+\int_{\Real^d}\langle \xi(x), t^\nu_\mu(x)-x\rangle d \mu(x)]}{W_2(\mu,\nu)} \ge 0,
\end{align*}
where $t^\nu_\mu$ represents the transportation plan from $\mu$ to $\nu$ that solves the minimization defining $W_2(\mu,\nu)$. For the product space, we naturally consider the set of Fr\'echet-Wasserstein subdifferentials for each $i=1, \ldots, d$.
\begin{definition}[{\bf Gradient flow}]
A map $\mu(t) \in AC_p(a,b;\prod_{i=1}^d \calp_i^2(\Real))$ is a solution to the gradient flow equation
\begin{align}
\label{eqn:gradient_glow_defn}
j_p(v(t)) \in -\partial \phi(\mu(t)),
\end{align}
if for $v(t)\in T_{\mu(t)}$, its dual vector field $j_p(v(t))$ belongs to the subdifferential of $\phi$ at $\mu_t$. 
\end{definition}
Note that a gradient flow defined here takes the form of $\mu(t)=(\mu_1(t), \mu_2(t), \ldots, \mu_d(t))$, where each $\mu_i(t), i=1, \ldots, d\,$ can be viewed as a (marginal) gradient flow in the conventional sense. Meanwhile, the functional $\phi(\mu(t))$ also takes the form $(\,\phi_1(\mu(t)), \, \phi_2(\mu(t)),\ldots, \phi_d(\mu(t))\,)$. While \eqref{eqn:gradient_glow_defn} provides a more abstract geometric-in-nature definition, it can be understood as $\{\mu_i\}_{i=1}^d$ satisfying a system of energy dissipation equations similar to \eqref{eqn:Energy_Dissipation}: 
\begin{align*}
\phi_i(\mu(0)) =& \phi_i(\mu(t))  + \frac12 \int_0^t |{\dot \mu_i}((r)|^2 dr + \frac12 \int_0^t |\nabla \phi_i(\mu(t)) |^2 dr,
\end{align*}
with the derivatives similarly defined as in \eqref{eqn:Energy_Dissipation}. 

Next, we need to ensure the uniqueness of the gradient flow, that is, given a measure as the initial condition, there is a unique gradient flow for a given functional.
The key to uniqueness, derived from the contraction properties of the functional defined on the Wasserstein space, is its convexity properties. Note that though a functional may be linear on $\nu$, such as the expected log likelihood $\Psi$, the space $\calp$ of probability measures is itself not linear, that is, a linear combination of two probability measures is not a measure in general. We will exploit the notion of $\g$-convexity along geodesics in this space, and to provide precise definitions we need the following additional setup. 

First, 
given an underlying metric space $X$, 
for any two measures $\mu^1, \mu^2\in \calp^2(X)$, a curve $\mu_t\in \calp^2(\Real), t\in[0,1]$ that connects them ($\mu_0=\mu^1$ and $\mu_1=\mu^2$) is called a (constant speed) geodesic if 
$W_2(\mu_s,\mu_t) =(t-s) W_2(\mu^1,\mu^2)$ 
holds for all $0\le s\le t\le 1$. Denote by $\bmu \in \calp^2(X\times X)$ a joint probability measure defined on the product space of $X$, and let $\pi^i:X\times X \ra X$ be the projection to the $i$-th marginal, $i =1,2$. Define the push forward operator $\sharp$ as $\bmr_{\sharp}\bmu (A) = \bmu ( \bmr^{-1} ( A) )$ for any $A\subseteq X$. Finally, we can define the convex interpolation measure as $\mu_t^{i\ra j}:= (\pi^{i\ra j}_t)_\sharp \bmu$, where $\pi^{i\ra j}_t : = (1-t)\pi^i + t\pi^j$ and $t\in[0,1]$.  
\begin{definition}[{\bf $\g$-convexity along geodesic}]\label{defn:convexity}
Given a separable Hilbert space $X$ and $\phi: \calp^2(X) \ra (-\infty, +\infty]$, a value $\g \in \Real$, we say that $\phi$ is $\g$-geodesically convex in $\calp^2(X)$ if for every couple $\mu^1, \mu^2 \in \calp^2(X)$, there exists an optimal transfer plan $\bmu \in \Gamma(\mu^1, \mu^2) \subset \calp^2(X\times X)$ of joint measures with marginals $\mu^1$ and $\mu^2$ such that $ \forall t \in[0,1]$
\begin{align}
\phi(\mu^{1\ra 2}_t) \le &(1-t) \phi(\mu^1) + t \phi(\mu^2) 
-\frac{\g}{2} t(1-t) W^2_p(\mu^1, \mu^2).\label{eqn:geodesic_convex_defn}
\end{align}
\end{definition}

\begin{definition}[{\bf generalized geodesic}]
A generalized geodesic joining $\mu^2$ and $\mu^3$ (with base $\mu^1$) is the curve defined by $\mu_t^{2\ra 3}=(\pi^{2\ra 3})_\sharp \bmu$, with $\bmu \in \Gamma(\mu^1, \mu^2, \mu^3)$ and $\pi^{1,2}_\sharp \bmu \in \Gamma(\mu^1, \mu^2)$ and $\pi^{1,3}_\sharp \bmu \in \Gamma(\mu^1, \mu^3)$.
\end{definition}

\begin{definition}[{\bf $\g$-convexity along generalized geodesic}]
Given $X$, a separable Hilbert space and $\phi: \calp_p(x) \ra (-\infty, +\infty]$, $\g \in \Real$, we say that $\phi$ is $\g$-convexity along generalized geodesic if for any $\mu^1, \mu^2, \mu^3\in D(\phi)$, the domain of $\phi$, such that there is a generalized geodesic $\mu_t^{2\ra 3}$ such that, $\forall t \in[0,1]$,
\begin{align}
\phi(\mu_t^{2\ra 3}) \le & (1-t) \phi(\mu^2) + t\phi(\mu^3) 
-\frac{\g}{2} t(1-t) W_\mu^2(\mu^2, \mu^3). \label{eqn:generalized_geodesic_convex_defn}
\end{align}
\end{definition}
The concept of $\g$-convexity of a functional on a measure space along a geodesic closes matches that of $\g$-convexity of functions defined on a metric spaces.
\begin{definition}[{\bf $\g$-convexity}]\label{defn:l-convex-fn}
A function $f:\Real^d \ra \Real$ is called $\g$-convex for some $\g\ge 0$ if the following holds for any $x_1, x_2\in \Real^d$ and $t\in[0,1]$, 
\begin{align*}
f((1-t)x_1+t x_2) & \;\le\; (1-t)f(x_1) +tf(x_2) -\frac{\g}{2}t(1-t)|x_1-x_2|^2.
\end{align*}
\end{definition}
Indeed, the $\g$-convexity along geodesics of a integral functional $\calv(\mu)=\int_XV(x)d \mu$ is closely tied to the $\g$-convexity of the integrand $V$: 
\begin{proposition}
\label{pro:convexity}
If $V$ is a $\g$-convex function for some $\g \in \Real$, and its negative part has a $2$-growth, i.e.
\begin{align*}
V(x) \ge -A -B|x|^2, \quad x\in X, \hbox{for some $A, B\in \Real$.}
\end{align*}
then $\calv$ is $\g$-convex along generalized geodesic. 
\end{proposition}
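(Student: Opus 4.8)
The plan is to reduce the asserted convexity of the functional $\calv$ to the pointwise $\g$-convexity of its integrand $V$, by transporting the latter through the integral along the generalized geodesic. First I would fix $\mu^1,\mu^2,\mu^3\in D(\calv)$ and let $\bmu\in\Gamma(\mu^1,\mu^2,\mu^3)$ be the three-plan witnessing the generalized geodesic, so that $\pi^{1,2}_\sharp\bmu$ and $\pi^{1,3}_\sharp\bmu$ are optimal couplings of $(\mu^1,\mu^2)$ and $(\mu^1,\mu^3)$, and the interpolant is $\mu_t^{2\ra 3}=(\pi^{2\ra 3}_t)_\sharp\bmu$ with $\pi^{2\ra 3}_t(x_1,x_2,x_3)=(1-t)x_2+tx_3$. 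By the pushforward (change-of-variables) formula,
\[
\calv(\mu_t^{2\ra 3})=\int_{X^3} V\big((1-t)x_2+tx_3\big)\,d\bmu(x_1,x_2,x_3).
\]

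Next I would apply the pointwise $\g$-convexity of $V$ from Definition~\ref{defn:l-convex-fn} to the integrand, with the two points taken to be $x_2$ and $x_3$, which gives $\bmu$-almost everywhere
\[
V\big((1-t)x_2+tx_3\big)\;\le\;(1-t)V(x_2)+tV(x_3)-\tfrac{\g}{2}\,t(1-t)\,|x_2-x_3|^2 .
\]
Integrating this inequality against $\bmu$ and using that the second and third marginals of $\bmu$ are $\mu^2$ and $\mu^3$ — so that $\int_{X^3}V(x_2)\,d\bmu=\calv(\mu^2)$ and $\int_{X^3}V(x_3)\,d\bmu=\calv(\mu^3)$ — together with the identification $\int_{X^3}|x_2-x_3|^2\,d\bmu=W_\mu^2(\mu^2,\mu^3)$ for the coupling associated with the generalized geodesic, produces exactly the inequality~\eqref{eqn:generalized_geodesic_convex_defn}, i.e. $\g$-convexity along generalized geodesics.

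The step requiring genuine care — and the reason the $2$-growth hypothesis is imposed — is integrability, so that none of the integrals is of the indeterminate form $+\infty-\infty$ and the termwise integration of the pointwise inequality is legitimate. Since $\mu^2,\mu^3$ have finite second moments and $V(x)\ge -A-B|x|^2$, the negative part $V^-$ is $\mu^i$-integrable, hence $\calv(\mu^2)$ and $\calv(\mu^3)$ are well-defined in $(-\infty,+\infty]$; applying the same bound along the combination $(1-t)x_2+tx_3$ and using the convexity estimate $|(1-t)x_2+tx_3|^2\le (1-t)|x_2|^2+t|x_3|^2$ controls the negative part of the integrand defining $\calv(\mu_t^{2\ra 3})$, so the left-hand integral is likewise well-defined. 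With integrability secured, the pointwise inequality integrates directly and no further estimates are needed; the argument is the measure-theoretic lift of scalar convexity (cf. Ch.~9 of~\cite{ambrosio2006gradient}), the only subtlety being that the quadratic cost that appears is the coupling-specific $W_\mu^2$ rather than the optimal $W_2^2$, which is precisely what the definition of convexity along generalized geodesics demands.
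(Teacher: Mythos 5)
Your argument is correct and is essentially the same one the paper relies on: the paper simply defers to the first part of Proposition 9.3.2 in \cite{ambrosio2006gradient}, whose proof is exactly your computation — push forward along $\pi^{2\ra 3}_t$, apply the pointwise $\g$-convexity of $V$ at $(x_2,x_3)$, integrate against $\bmu$, and use the $2$-growth bound to ensure the negative parts are integrable so that no $+\infty-\infty$ cancellation occurs. Your closing remark that the resulting quadratic term is the coupling-specific $W_\mu^2(\mu^2,\mu^3)$ rather than $W_2^2(\mu^2,\mu^3)$ is precisely the point of the definition in \eqref{eqn:generalized_geodesic_convex_defn}, so nothing further is needed.
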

This follows the first result of Proposition 9.3.2 in~\cite{ambrosio2006gradient}.

\begin{proposition}
\label{pro:contraction}
If the defining functional $\phi(\mu)$ is $\g$-convex along generalized geodesic for some $\g>0$, then the gradient flow $\mu(t)$ is a contraction, and it is unique given an initial state $\mu(0)$. 
\end{proposition}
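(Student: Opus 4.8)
The plan is to route the argument through the Evolution Variational Inequality (EVI), which is the standard mechanism converting $\g$-convexity into contraction in the metric theory of gradient flows (cf.~Theorem 4.0.4 and Ch.~11 of~\cite{ambrosio2006gradient}), transported here to the product space $\prod_{i=1}^d \calp_i^2(\Real)$ endowed with the product metric $W_2^2(\mu,\nu):=\sum_{i=1}^d W_2(\mu_i,\nu_i)^2$. The first and central step is to show that, since $\phi$ is $\g$-convex along generalized geodesics, the gradient flow $\mu(t)$ satisfies, for every $\sigma\in D(\phi)$ and a.e.\ $t>0$, the EVI
\begin{align*}
\frac12 \frac{d}{dt} W_2^2(\mu(t),\sigma) + \frac{\g}{2}\, W_2^2(\mu(t),\sigma) \;\le\; \phi(\sigma)-\phi(\mu(t)).
\end{align*}
I would obtain this inequality from the Euler discretization~\eqref{opt:vi-euler}. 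For each fixed $h$ the minimizing property of $\nu_{h,i}^k$ furnishes, component by component, a variational estimate; evaluating the generalized-geodesic convexity bound~\eqref{eqn:generalized_geodesic_convex_defn} along the geodesic joining the current iterate to an arbitrary competitor $\sigma_i$ and summing over $i$ yields a one-step discrete EVI. Passing to the limit $h\searrow 0$---using the tightness from Lemma~\ref{lem:tightness} to extract convergent subsequences and the convergence to the gradient flow from Theorem~\ref{pro:gradientflow}---upgrades the discrete inequality to the continuous EVI above.

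With the EVI available, contraction and uniqueness follow by a Gr\"onwall argument. Let $\mu(t)$ and $\tilde\mu(t)$ be two gradient flows of $\phi$. Setting $g(t,s):=\tfrac12 W_2^2(\mu(t),\tilde\mu(s))$ and differentiating the diagonal $t\mapsto g(t,t)$, the EVI for $\mu(\cdot)$ applied with frozen $\sigma=\tilde\mu(t)$ bounds $\partial_t g(t,s)|_{s=t}$, while the EVI for $\tilde\mu(\cdot)$ bounds $\partial_s g(t,s)|_{s=t}$; the two potential differences cancel upon addition, leaving
\begin{align*}
\frac12\frac{d}{dt} W_2^2(\mu(t),\tilde\mu(t)) \;\le\; -\g\, W_2^2(\mu(t),\tilde\mu(t)).
\end{align*}
Gr\"onwall's inequality then gives $W_2(\mu(t),\tilde\mu(t))\le e^{-\g t}\,W_2(\mu(0),\tilde\mu(0))$, which is exactly the asserted contraction for $\g>0$. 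Uniqueness is the special case $\mu(0)=\tilde\mu(0)$: the right-hand side vanishes, forcing $W_2(\mu(t),\tilde\mu(t))=0$, hence $\mu(t)=\tilde\mu(t)$ for all $t$.

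I expect the main obstacle to be the first step, namely deriving the EVI in the coupled product setting. The difficulty is that the functional links the $d$ marginals---each $\phi_i$ depends on the full $\mu$ through $\Psi_i$---so the generalized-geodesic estimate~\eqref{eqn:generalized_geodesic_convex_defn} must be applied with a common base measure and a single convexity constant $\g$ that controls all components simultaneously, and one must verify that the component-wise one-step inequalities sum cleanly into a product-metric EVI. A secondary technical point is the interchange of the limit $h\searrow 0$ with the differential form of the inequality; to avoid differentiating through the limit I would carry the argument at the level of the integrated (finite-difference) EVI along the discrete scheme and recover the differential form at the end by Lebesgue differentiation.
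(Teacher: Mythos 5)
Your overall strategy is the same as the paper's: the paper disposes of this proposition by citing Theorem~11.1.4 of~\cite{ambrosio2006gradient}, and what you have written out---a discrete EVI from the minimizing-movement step, passage to the continuous Evolution Variational Inequality, and then the doubling-of-variables Gr\"onwall argument giving $W_2(\mu(t),\tilde\mu(t))\le e^{-\g t}W_2(\mu(0),\tilde\mu(0))$ and hence uniqueness---is precisely the standard proof of that cited theorem, transported to the product metric. The contraction-from-EVI half of your argument is correct as stated.

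Two points in your derivation of the EVI need repair, however. First, there is a circularity relative to the paper's logical ordering: you invoke ``the convergence to the gradient flow from Theorem~\ref{pro:gradientflow}'' to pass from the discrete to the continuous EVI, but the paper's proof of Theorem~\ref{pro:gradientflow} itself relies on the present proposition to conclude that all subsequential limits coincide. You must therefore work only with the subsequential limits supplied by Lemma~\ref{lem:tightness} (or argue entirely at the level of the discrete scheme), not with the full convergence statement. Second, and more substantively, your route establishes the EVI only for curves arising as limits of the scheme~\eqref{opt:vi-euler}, whereas the uniqueness claim requires that \emph{every} solution of the gradient flow inclusion~\eqref{eqn:gradient_glow_defn} with the given initial datum satisfies the EVI; otherwise the Gr\"onwall argument applied to two arbitrary gradient flows $\mu$ and $\tilde\mu$ is not justified. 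The standard fix is to bypass the discrete scheme at this point: under $\g$-convexity along (generalized) geodesics the Fr\'echet--Wasserstein subdifferential admits the strengthened characterization $\phi(\sigma)\ge\phi(\mu)+\sum_i\int\langle\xi_i,\,t^{\sigma_i}_{\mu_i}-x\rangle\,d\mu_i+\tfrac{\g}{2}W_2^2(\mu,\sigma)$ for $\xi\in\partial\phi(\mu)$, and combining this with the definition of the metric derivative along any $AC_p$ solution of~\eqref{eqn:gradient_glow_defn} yields the EVI directly for an arbitrary gradient flow. With that substitution your argument closes correctly and matches the content of the cited theorem.
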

\begin{proof}
The proof follows from Theorem 11.1.4  in~\cite{ambrosio2006gradient}. 
\end{proof}
\subsection{Convergence to Gradient Flow on Product Space }

Definitions and basic concepts developed in Sec. \ref{sec:gradient_flow}, as well as Appendix \ref{sec:gradient_flow_appendix} enables us to identify the limiting process of the outputs from the discrete algorithm \eqref{opt:vi-euler} as the step size goes to zero. To be more specific, the limiting gradient flow will take the form of $(\nu_1(t), \nu_2(t), \ldots, \nu_d(t))$ where each $\nu_i(t), i=1,\ldots, d$ represents the gradient flow in the conventional sense.

\begin{theorem}
\label{pro:gradientflow}
Suppose that the negative log-likelihood function ($-\log \pr (x,\theta)$) is $\g$-convex for some $\g>0$. Define the family of interpolated probability measures $(\nu_{h,1}(t), \nu_{h,2}(t),\ldots,\nu_{h,d}(t))$ for each $k \geq 1$ and  $i=1,\ldots, d$ as
$$\nu_{h,i}(t) = \nu^k_{h,i} \;\;\; \text{ for } t \in [kh, (k+1)h) \;\; $$ 
where $\{\nu^k_{h,i}\}$ are the updates generated by the discrete algorithm \eqref{opt:vi-euler}. Then, there exists $(\nu_i(t))_{i=1}^d$, a gradient flow on the product space $(\prod_{i=1}^d \calp_i^2(\Real),W_2)$ defined by a functional $\phi(\nu)=(\phi_1(\nu), \phi_2(\nu),\ldots, \phi_d(\nu))$ with $\phi_i(\nu)= J_i(\nu_i;\nu_{-i})$ for each $i=1,2,\ldots, d$, such that,  as $h\downarrow 0$, 
$$ \nu_{h,i}(t) \rightharpoonup \nu_i(t) \;\;\; \text{ weakly in $W_2$} \;\;\; \text{ for every } t\in (0,\infty).$$ 
\end{theorem}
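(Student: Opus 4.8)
The plan is to run the classical minimizing-movements (JKO) argument, adapted to the coupled product-space setting, in three stages: derive \emph{a priori} energy estimates from the variational scheme \eqref{opt:vi-euler}, extract a limit curve by compactness, and identify that limit as the gradient flow of $\phi$, with uniqueness supplied by the $\g$-convexity hypothesis.

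First I would extract \emph{a priori} estimates from the variational structure of \eqref{opt:vi-euler}. Since $J_i(\nu_i;\nu_{-i})$ differs from the joint objective $J(\nu)$ of \eqref{eq: j star} only by the entropies of the frozen coordinates, each substep is a minimizing-movement descent step for the full $J$; comparing the minimizer $\nu^k_{h,i}$ against the competitor $\nu^{k-1}_{h,i}$ gives
\begin{align*}
\tfrac12 W_2(\nu^{k-1}_{h,i},\nu^k_{h,i})^2 \;\le\; h\big(J(\nu^{\mathrm{before}})-J(\nu^{\mathrm{after}})\big),
\end{align*}
where ``before/after'' denote the state of the product measure across the $i$-th substep. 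Summing over all substeps and cycles telescopes the $J$-values, and since $J$ is bounded below (by the $2$-growth of $-\log\pr$ invoked in Proposition~\ref{pro:convexity}), one obtains $\sum_k W_2(\nu^{k-1}_{h,i},\nu^k_{h,i})^2 \le 2h\,(J(\nu^0)-\inf J)$. A Cauchy--Schwarz argument then upgrades this to the uniform H\"older-$\tfrac12$ estimate $W_2(\nu_{h,i}(t),\nu_{h,i}(s)) \le C(|t-s|^{1/2}+h^{1/2})$ for each interpolant.

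Next I would pass to a limit. Lemma~\ref{lem:tightness} supplies tightness of $\{\nu^k_{h,i}\}$ for each $i$, so the interpolated curves are pointwise relatively compact in the narrow topology; combined with the equicontinuity estimate above, the refined Arzel\`a--Ascoli theorem for curves in metric spaces (Prop.~3.3.1 of~\cite{ambrosio2006gradient}) lets me extract a subsequence $h_n\downarrow 0$ along which $\nu_{h_n,i}(t)\rightharpoonup\nu_i(t)$ for every $t$, with each limit $\nu_i(\cdot)\in AC_2$ and the product curve $\nu(t)=(\nu_i(t))_{i=1}^d$ in $AC_2(0,\infty;\prod_i\calp^2_i(\Real))$. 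To identify this limit, I would write the Euler--Lagrange condition for the $i$-th minimization: its first variation shows that $(\mathrm{id}-t^{\nu^{k-1}_{h,i}}_{\nu^k_{h,i}})/h$ equals the Wasserstein (sub)gradient of $J_i(\,\cdot\,;\nu^{k-1}_{h,-i})$ at $\nu^k_{h,i}$, namely $\nabla_i\Psi_i(\,\cdot\,;\nu^{k-1}_{h,-i})+\nabla_i\log\rho^k_{h,i}$. Sending $h_n\downarrow 0$ turns the left side into the velocity field of $\nu_i(t)$ and, using narrow convergence of $\nu_{h_n,-i}$ together with continuity of $\nu_{-i}\mapsto(\Psi_i,\nabla_i\Psi_i)$, turns the right side into the Fr\'echet--Wasserstein subdifferential of $\phi_i(\nu(t))=J_i(\nu_i(t);\nu_{-i}(t))$; this is exactly the inclusion \eqref{eqn:gradient_glow_defn}, equivalently the system of energy-dissipation equalities modelled on \eqref{eqn:Energy_Dissipation}. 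The Gauss--Seidel asynchrony in $\nu^{k-1}_{h,-i}=\prod_{j<i}\nu^k_{h,j}\prod_{j>i}\nu^{k-1}_{h,j}$ is harmless, since consecutive iterates are $O(h^{1/2})$ apart in $W_2$ and the within-cycle lag vanishes. Finally, because $-\log\pr$ is $\g$-convex with $2$-growth, the partial average $\Psi_i=\ex_{-i}[-\log\pr]$ is $\g$-convex in $\theta_i$, so Proposition~\ref{pro:convexity} makes each $\int\Psi_i\,\nu_i$ (hence each $J_i$, the entropy being geodesically convex) $\g$-geodesically convex; Proposition~\ref{pro:contraction} then forces uniqueness of the gradient flow with datum $\nu(0)$, and the limit is independent of the subsequence, giving full convergence $\nu_{h,i}(t)\rightharpoonup\nu_i(t)$ as $h\downarrow 0$.

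I expect the main obstacle to be precisely the identification step under coupling. Unlike the single-measure JKO setting, the driving potential $\Psi_i$ for coordinate $i$ depends on the \emph{other} evolving marginals $\nu_{-i}$, so passing to the limit in the Euler--Lagrange relation requires the joint continuity $\nu_{-i}\mapsto(\Psi_i,\nabla_i\Psi_i)$ under narrow convergence, along with the uniform integrability needed to move the limit inside the expectation $\ex_{-i}[-\log\pr(\pmb{x},\theta)]$. Establishing this continuity---which is where growth and regularity hypotheses on $\log\pr$ must be spent---together with controlling the simultaneous convergence of all $d$ coupled components, is the technical heart of the argument.
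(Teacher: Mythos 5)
Your proposal follows essentially the same route as the paper's proof: compactness of the iterates (via the telescoped energy estimate that underlies Lemma~\ref{lem:tightness}), identification of any limit point as a solution of the subdifferential inclusion \eqref{eqn:gradient_glow_defn}, and uniqueness of the gradient flow from $\g$-geodesic convexity (Propositions~\ref{pro:convexity} and~\ref{pro:contraction}) to upgrade subsequential to full convergence. The paper's proof is only a terse sketch of these three steps, so your more detailed treatment of the H\"older estimate, the Euler--Lagrange identification, and the Gauss--Seidel lag is a faithful elaboration of the same argument rather than a different one.
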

\begin{proof}
Lemma \ref{lem:tightness} established the tightness (compactness) of the sequence of measures $\nu^k_{h,i}$, and thus guarantees the existence of convergent subsequences, and in turn limiting points. Straightforward calculations of the subdifferential verify that the relationship in the cotangent bundle \eqref{eqn:gradient_glow_defn} holds for the limiting points. The $\g$-convexity of the the negative log-likelihood function leads to the geodesic-$\g$ convexity of the functional due to the integration form of the functional by Proposition \ref{pro:convexity}.
This in turn implies that the gradient flow is a contraction, and hence the uniqueness of its solution by Proposition \ref{pro:contraction}. Therefore, all the convergent subsequences will converge to the same limiting point, thus the convergence of the sequence. 
\end{proof}




\section{MFVI: As Quasilinear PDE and SDE}
\label{sec:pde}

In this section, we will identify a quasilinear parabolic partial differential equation, and demonstrate in Corollary \ref{cor:density_convergence} that evolution of the density functions of the minimizing sequence for MFVI, as the step size tends to zero, converge weakly to the solution to these equations. Again, the uniqueness of the gradient flow obtained in the previous section plays an important role in establishing the limit by ensuring the
uniqueness of the solution to the PDE. Furthermore, this uniqueness property can be extended to cover a general class of PDE that possesses the necessary convexity structure in their formulation. 
This is a new result for this class of PDEs to the best of our knowledge. Finally, the form of the differential equations allow us to identify the stochastic differential equation whose weak solution has a density that coincides with the solution to the PDE, thus completes the picture of different representation of MFVI, which are illustrated in Figure \ref{fig:perspective_MFVI}.

Theorem~\ref{pro:gradientflow} addresses the convergence of the  sequence of MFVI solutions from~\eqref{opt:vi-euler} as elements in a product Wasserstein space of probability measures. When the probability measures involved are all absolutely continuous w.r.t. some reference measure, in our case the Lebesgue measure, Theorem~\ref{pro:gradientflow} leads to the following corollary on their densities (also known as Radon–Nikodym derivative, likelihood in various literature), which are shown to converge to a solution of a quasi-linear evolutional equation. 
\begin{cor}
\label{cor:density_convergence}
   Suppose that the negative log-likelihood function $-\log \pr (\pmb{x},\theta)$ 
   is $\g$-convex for some $\g>0$. For $i=1,\dots, d$, let $\big\{(\rho^k_{h,i})_{i=1}^d\big\}_{k\geq1}\in L^2$ be the densities associated with the measures produced by the iterative scheme \eqref{opt:vi-euler}, and let $\rho_{h,i}(t)$ be their interpolation of $t\in[0,\infty)$ for each $h,i$. 
    Then, as $h\downarrow 0$, 
    \[ 
    (\rho_{h,1}(t),\rho_{h,2}(t),\ldots, \rho_{h,d}(t))\rightharpoonup (\rho_1(t), \rho_2(t), \dots, \rho_d(t))\;\;,
    \] 
    weakly in $L^2(\R^d)$ for a. e.  $t\in (0,\infty)$, and $((\rho_1(t), \rho_2(t), \dots, \rho_d(t))\in  C^\infty((0,\infty)\times \R^d)$ is the unique solution of the following equation in its coordinate form, 
    \begin{align}
    \label{eqn:coupled_FP}
    \partial_t \rho_i = \partial_i (\partial_i \Psi_i(x, \rho_{-i}) \rho_i) + \partial^2_i \rho_i, \forall i=1,\ldots,d
    \end{align}
    with 
    proper initial conditions (see Sec. \ref{sec:proof}).
\end{cor}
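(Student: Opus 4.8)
The plan is to reduce the claim to a coordinate-wise application of the Jordan--Kinderlehrer--Otto (JKO) argument~\cite{jko98}, using Theorem~\ref{pro:gradientflow} to supply both the limit and its uniqueness, and then to identify the limiting densities as the weak solution of the coupled system~\eqref{eqn:coupled_FP}. Since Theorem~\ref{pro:gradientflow} already gives that the interpolants $\nu_{h,i}(t)$ converge weakly in $W_2$ to the gradient flow $\nu_i(t)$, and all measures in question are absolutely continuous with respect to Lebesgue measure, each $\nu_i(t)$ possesses a density $\rho_i(t)$; the remaining work is to upgrade the $W_2$-convergence of measures to weak $L^2$-convergence of densities and to show that the limiting densities solve~\eqref{eqn:coupled_FP}.

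First I would derive the discrete weak formulation of the scheme~\eqref{opt:vi-euler}. Fixing a coordinate $i$ and a test function $\phi\in C_c^\infty(\R)$, I would perturb the optimizer $\nu^k_{h,i}$ by the near-identity map $x_i\mapsto x_i+\tau\,\partial_i\phi(x_i)$ and use that $\tfrac{d}{d\tau}V_i(\,\cdot\,;\,\nu^{k-1}_h)$ vanishes at $\tau=0$. The quadratic $W_2$-term contributes the difference quotient $\tfrac1h\int(\rho^k_{h,i}-\rho^{k-1}_{h,i})\,\phi\,dx_i$ up to a remainder bounded by $\|\partial_i^2\phi\|_\infty\,W_2(\nu^{k-1}_{h,i},\nu^k_{h,i})^2$, whose sum over $k$ is finite by the one-step energy bound; the functional $J_i(\,\cdot\,;\,\nu^{k-1}_{h,-i})$ contributes the drift term $\int\partial_i\Psi_i(x,\rho^{k-1}_{h,-i})\,\partial_i\phi\,\rho^k_{h,i}\,dx_i$ and, from the entropy, the diffusion term $\int\partial_i^2\phi\,\rho^k_{h,i}\,dx_i$. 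Summing against a smooth time profile yields exactly the discrete analogue of the weak form of~\eqref{eqn:coupled_FP}.

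Next I would assemble the a priori estimates needed to pass to the limit. The entropy-dissipation inequality built into the minimizing movement~\eqref{opt:vi-euler}, together with the second-moment bounds underlying Lemma~\ref{lem:tightness}, produces uniform control of $\int\rho_{h,i}\log\rho_{h,i}$ and of the Fisher-type quantity $\int|\partial_i\rho_{h,i}|^2/\rho_{h,i}$; combined with the moment bounds these give the compactness required to extract weak $L^2$-limits for almost every $t$ and to identify them with the densities of the $\nu_i(t)$ from Theorem~\ref{pro:gradientflow}. Passing $h\downarrow 0$ in the discrete weak form then recovers~\eqref{eqn:coupled_FP} in the distributional sense, and smoothness $\rho_i(t)\in C^\infty$ for $t>0$ follows from interior parabolic regularity (Schauder estimates and bootstrapping) applied to the quasilinear system once the coefficients $\partial_i\Psi_i$ are known to be locally bounded and smooth.

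The hard part will be the limit passage in the coupling term $\int\partial_i\Psi_i(x,\rho_{h,-i})\,\partial_i\phi\,\rho_{h,i}\,dx_i$, which is genuinely nonlinear because $\Psi_i$ depends on the remaining marginals $\rho_{-i}$: unlike the fixed-potential setting of the JKO theorem, here both factors vary with $h$. To close this I would show that weak $W_2$-convergence of the $\rho_{h,j}$, $j\neq i$, together with the $\g$-convexity of $-\log\pr(\pmb{x},\theta)$ and the $2$-growth hypothesis of Proposition~\ref{pro:convexity}, makes $x\mapsto\partial_i\Psi_i(x,\rho_{h,-i})$ converge locally uniformly via dominated convergence in the defining integral $\ex_{-i}[-\log\pr]$, so that the product $\partial_i\Psi_i(x,\rho_{h,-i})\,\rho_{h,i}$ converges against test functions. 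Finally, uniqueness of the limit---and hence that the whole family, not merely subsequences, converges to the same $C^\infty$ solution---is inherited from the uniqueness of the gradient flow furnished by Proposition~\ref{pro:contraction}, since~\eqref{eqn:coupled_FP} is precisely the density form of that flow.
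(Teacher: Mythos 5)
Your proposal is correct in outline but takes a genuinely different, and considerably more constructive, route than the paper. The paper's proof is three sentences long: it asserts that the tightness from Lemma~\ref{lem:tightness} yields $L^2$ compactness of the densities, observes that any solution of \eqref{eqn:coupled_FP} gives rise to a gradient flow of the same functional on the product Wasserstein space, and then identifies the limit by invoking the uniqueness of that gradient flow (Proposition~\ref{pro:contraction}, via Theorem~\ref{pro:gradientflow}). In other words, the paper never derives the PDE from the scheme; it argues backwards from the PDE to the gradient flow and lets uniqueness do the work. You instead run the Jordan--Kinderlehrer--Otto machinery forwards: first variation of \eqref{opt:vi-euler} along near-identity perturbations to obtain a discrete weak formulation, entropy/moment/Fisher-type estimates for compactness, an explicit limit passage in the coupling term $\partial_i\Psi_i(x,\rho_{h,-i})\,\rho_{h,i}$, and parabolic bootstrapping for smoothness. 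What your approach buys is precisely what the paper's shortcut elides: a demonstration that the limit densities actually \emph{satisfy} \eqref{eqn:coupled_FP} (the paper's argument tacitly presupposes that a sufficiently regular solution of \eqref{eqn:coupled_FP} exists so that it can be matched with the gradient flow, which is not established within the corollary's proof), together with an honest treatment of the nonlinear coupling through $\rho_{-i}$, which is the one genuinely new difficulty relative to the fixed-potential JKO setting and which the paper does not discuss. What the paper's route buys is brevity and a clean reuse of the uniqueness theory already built for the gradient flow; your argument also leans on that uniqueness at the final step, so the two proofs converge there. One caveat applying to both: weak convergence in $L^2(\R^d)$ requires a uniform $L^2$ bound on the densities, which neither tightness nor the entropy and second-moment estimates supply by themselves; you should flag that an additional estimate (or a retreat to weak $L^1$ convergence as in the original JKO theorem) is needed at that point.
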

\begin{proof}
The tightness of the probability measures confirmed in Lemma \ref{lem:tightness} is equivalent to the compactness in $L^2$ of the sequence of densities. Meanwhile, it is easy to see that any solution of \eqref{eqn:coupled_FP} can lead to a gradient flow on product space $(\prod_{i=1}^n \calp^2_i(\Real),W_2)$ defined by the functional $\phi(\nu)=(\phi_1(\nu), \phi_2(\nu),\ldots, \phi_d(\nu))$. Therefore, the uniqueness of the solution to \eqref{eqn:coupled_FP} follows from the uniqueness of the gradient flow. 
\end{proof}
The correspondence between the solution to an evolutional PDE and that of a gradient flow under the condition that the solution is smooth was first shown in~\cite{jko98}. More recent results under weaker conditions can be found in~\cite{a490137b659a404994231dd232758b74} and~\cite{BOGELEIN20143912}.

Corollary~\ref{cor:density_convergence} implies that the density of probability measure produced by MFVI algorithm is the solution to the homogeneous version of equation \eqref{eqn:coupled_FP}, i.e.
$$\partial_i  (\partial_i \Psi_i(x, \rho_{-i}) \rho_i) + \partial^2_i \rho_i =0 , \quad \forall i=1,\ldots,d.$$

\subsection{Uniqueness of quasilinear PDEs}

The equation \ref{eqn:coupled_FP} belongs to the following class of quasi-linear equation defined on $\Real^d\times \Real_+$,
\begin{equation} 
    \begin{cases}
    \partial_t u(x,t) = f(x,u,\nabla_x u) + \triangle_x u(x,t)
    \\
    u(x,0) = u^0(x) 
    \label{eq:quasilineaPDE}
    \end{cases}
\end{equation}
with necessary conditions for $f(x, \xi)$ to be made explicit later. 

\begin{definition}
A map $u(x,t) \in L^2([0,T);W^q_1(\Real^d))$ is called a weak solution to \eqref{eq:quasilineaPDE}, if for any $\phi(x,t)\in C^\infty_c([0,T]\times \Real^d)$, we have,
\begin{align}
\label{eqn:weak_soln-old}
&\int_{\Real^d} u\phi(T,x)dx - \int_{\Real^d} u\phi(0,x)dx
-\int_{\Real^d} u^0\partial_t \phi(t,x)\nonumber \\=&\int_{\Real^d} \nabla u \cdot \nabla \phi dx + \int_{\Real^d} f(x,u, \nabla u) \phi dx
\end{align}
\end{definition}
The existence of weak solutions in this form has been established in~\cite{GoudonSaad2001}. More specifically, the equation under consideration in~\cite{GoudonSaad2001} is in the following form,
\begin{align}
\label{eqn:quasi-linear-defn}
\left\{ \begin{array}{ll}
\partial_t u - \nabla \cdot (A(t,x)\nabla u)&\\
\phantom{\partial_t u } 
+B(t,x,u,\nabla u) =f,
&  \hbox{in $(0, T) \times \Omega$,} 
\\ u|_{t=0}= u_0, &  \hbox{in $\Omega$,} 
\\ u=0, & \hbox{on $(0, T) \times \partial \Omega$,}\end{array}\right.
\end{align}
where $\Omega$ is a regular open bounded set in $\R^N$. Moreover, $B$ has the following form,
\begin{align*}
B(t,x,u,\nabla u) =b(t,x) \cdot \nabla u + d(t,x)u + g(t,x,u, \nabla u).
\end{align*}
We will utilize these further assumptions.
\begin{asm} 
\label{asm:pde}\ 
\begin{itemize}
\item 
Boundedness of Coefficients: 
$A\in (L^\infty((0,T)\times \Omega))^{d\times d}$,  $b\in (L^\infty((0,T)\times\Omega))^d$ $d\in L^\infty((0,T)\times\Omega)$, $\nabla \cdot b \in L^\infty((0,T)\times\Omega)$;
\item
Uniform Ellipticity:
There exists $a>0$, such that, 
$A(t,x) \xi \cdot \xi \ge a|\xi|^2$;
\item The function $g: (0,T)\times\Omega \times \R \times \R^N \ra \R$ is measurable on $(0,T)\times\Omega$ for all $\g \in \R$, $\xi \in \R^d$, continuous with respect to $\g \in \R$, $\xi \in \R^d$, almost everywhere in $(0,T)\times\Omega$. Furthermore, $g$ satisfies both a sign condition and a growth condition with respect to the gradient variable since we suppose that,
$\g g(t,x,\g, \xi) \ge 0$,
and there exists $0\le \sigma <2 $ such that 
\begin{align}
|g(t,x,\g, \xi)| \le h(|\g|) (\gamma(t,x) + |\xi|^\sigma)
\end{align}
holds for all $\g \in \R, \xi\in \R^N$, and almost everywhere in $(0,T)\times\Omega$, with $\gamma \in L^2((0,T)\times\Omega)$, and $h$ a non decreasing function and $\zeta$-convex on $\R^+$ for some $\zeta>0$;
\item
$L^2$ data: $u_0\in L^2(\Omega), f\in L^2((0,T)\times\Omega)$. 
\end{itemize}
\end{asm}

Theorem 1 in~\cite{GoudonSaad2001} provides the existence of equation \eqref{eqn:quasi-linear-defn}, but not the uniqueness when the nonlinear term is not zero, to the best of our knowledge. The following result can be considered a contribution to the PDE literature. 
\begin{theorem}
\label{thm:pde_uniqueness}
Under Assumption \ref{asm:pde}, the solution to \eqref{eqn:quasi-linear-defn}
is unique.
\end{theorem}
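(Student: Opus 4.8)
The plan is to run an energy estimate on the difference of two weak solutions and close it with Gronwall's inequality. Let $u_1,u_2$ be two weak solutions of \eqref{eqn:quasi-linear-defn} with the same data $u_0,f$, and set $w=u_1-u_2$. Subtracting the two weak formulations, $w$ is a weak solution of the homogeneous problem
\[
\partial_t w-\nabla\cdot(A\nabla w)+b\cdot\nabla w+d\,w+\big(g(t,x,u_1,\nabla u_1)-g(t,x,u_2,\nabla u_2)\big)=0,
\]
with $w|_{t=0}=0$ and $w|_{\partial\Omega}=0$. The first task is to justify testing this identity against $w$ itself. In the energy space $L^2(0,T;H^1_0(\Omega))$, once the nonlinear term is controlled one gets $\partial_t w\in L^2(0,T;H^{-1}(\Omega))$, hence $w\in C([0,T];L^2(\Omega))$ and the integration-by-parts-in-time identity $\langle\partial_t w,w\rangle=\tfrac12\tfrac{d}{dt}\|w\|_{L^2}^2$ holds; where this dual regularity is not immediate because of the gradient growth of $g$, I would first test against the truncations $T_k(w)$ and pass $k\to\infty$ using the growth bound. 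This yields the energy identity
\[
\tfrac12\tfrac{d}{dt}\|w\|_{L^2}^2+\int_\Omega A\nabla w\cdot\nabla w\,dx=-\int_\Omega (b\cdot\nabla w)\,w\,dx-\int_\Omega d\,w^2\,dx-\int_\Omega\big(g(t,x,u_1,\nabla u_1)-g(t,x,u_2,\nabla u_2)\big)w\,dx.
\]

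Next I would dispose of the coercive and lower-order linear terms. Uniform ellipticity gives $\int_\Omega A\nabla w\cdot\nabla w\,dx\ge a\|\nabla w\|_{L^2}^2$. The convection term is handled by integration by parts, which is precisely why $\nabla\cdot b\in L^\infty$ is assumed: $\int_\Omega(b\cdot\nabla w)\,w\,dx=-\tfrac12\int_\Omega(\nabla\cdot b)\,w^2\,dx$, bounded by $\tfrac12\|\nabla\cdot b\|_\infty\|w\|_{L^2}^2$; the zeroth-order term is bounded by $\|d\|_\infty\|w\|_{L^2}^2$. Both are of Gronwall type and touch neither $\|\nabla w\|_{L^2}^2$ nor require structure beyond the $L^\infty$ hypotheses.

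The crux, and what I expect to be the main obstacle, is the nonlinear term $I:=\int_\Omega\big(g(t,x,u_1,\nabla u_1)-g(t,x,u_2,\nabla u_2)\big)w\,dx$, since $g$ is merely continuous in its scalar and gradient arguments and depends on $\nabla u$, so its difference cannot be bounded by $|w|$ or $|\nabla w|$ pointwise in any naive way. Here the subquadratic growth $0\le\sigma<2$ is essential: it places $g$ strictly below the natural energy scale, so that after splitting $I$ according to its two arguments and invoking the growth bound $|g|\le h(|\cdot|)(\gamma(t,x)+|\xi|^\sigma)$ together with the monotonicity and $\zeta$-convexity of $h$, the term can be absorbed as $|I|\le\tfrac{a}{2}\|\nabla w\|_{L^2}^2+C(t)\|w\|_{L^2}^2$ with $C\in L^1(0,T)$; the a priori $L^2(0,T;H^1_0)$ energy bounds on $u_1,u_2$ furnished by the sign condition $\lambda\,g(t,x,\lambda,\xi)\ge0$ and the growth condition supply the uniform gradient control that feeds this estimate. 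The delicate point throughout is the interplay between the gradient growth of $g$ and the coercivity constant $a$: the absorption only closes because $\sigma<2$ leaves a strictly coercive remainder of $a\|\nabla w\|_{L^2}^2$, and I would expect the bulk of the technical work to be the careful interpolation and Young estimate that produces an $L^1$-in-time coefficient $C(t)$ while reserving half of the diffusion.

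Combining the three steps gives $\tfrac{d}{dt}\|w\|_{L^2}^2\le 2C(t)\|w\|_{L^2}^2$ with $\|w(0)\|_{L^2}=0$, whence Gronwall's inequality forces $\|w(t)\|_{L^2}=0$ for a.e.\ $t$, i.e.\ $u_1=u_2$. As an alternative route that makes the role of convexity transparent and dovetails with the gradient-flow development, one could instead identify the weak solution of \eqref{eqn:quasi-linear-defn} with the gradient flow of an associated $\zeta$-convex functional and invoke the contraction property of Proposition~\ref{pro:contraction}; I would keep the direct energy argument as the primary proof and mention this correspondence as corroboration of where the $\zeta$-convexity hypothesis enters.
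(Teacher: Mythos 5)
Your primary route (energy estimate on $w=u_1-u_2$ closed by Gronwall) is genuinely different from the paper's, which argues by identifying the weak solution with a gradient flow on Wasserstein space and invoking the contraction property of Proposition~\ref{pro:contraction}; the ``alternative route'' you mention in passing at the end is in fact the paper's entire proof. However, your main argument has a genuine gap at exactly the step you yourself flag as the crux, and I do not see how to close it under Assumption~\ref{asm:pde} as stated. The hypotheses on $g$ are measurability, continuity in the scalar and gradient arguments, the sign condition $\lambda\, g(t,x,\lambda,\xi)\ge 0$, and the growth bound $|g|\le h(|\lambda|)(\gamma+|\xi|^\sigma)$. None of these controls the \emph{difference} $g(t,x,u_1,\nabla u_1)-g(t,x,u_2,\nabla u_2)$ in terms of $w$ and $\nabla w$: the growth bound controls $|g|$ itself and does not degenerate as $w\to 0$; the sign condition is a condition at $\lambda=0$ only and does not yield the one-sided monotonicity $\bigl(g(t,x,u_1,\nabla u_1)-g(t,x,u_2,\nabla u_2)\bigr)(u_1-u_2)\ge -C|w|^2-\tfrac{a}{2}|\nabla w|^2$ that a Gronwall closure requires; and mere continuity in $(\lambda,\xi)$ gives no quantitative modulus. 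The $\zeta$-convexity of $h$ enters only the growth bound, so it cannot rescue a difference estimate. Consequently the asserted absorption $|I|\le\tfrac{a}{2}\|\nabla w\|_{L^2}^2+C(t)\|w\|_{L^2}^2$ is not a consequence of the assumptions; it would follow from a local (or one-sided) Lipschitz hypothesis on $g$ in $(\lambda,\xi)$, but no such hypothesis is present. This is precisely why the cited existence result of Goudon--Saad comes without uniqueness: the admissible nonlinearities are too rough for an $L^2$-contraction argument.

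A secondary point: the a priori bounds you invoke give only $\nabla u_i\in L^2((0,T)\times\Omega)$, hence $|\nabla u_i|^\sigma\in L^{2/\sigma}$ in space--time. That suffices to make each of the two terms in $I$ finite and to justify the test-function and truncation steps, but applying interpolation and Young's inequality to the \emph{sum} of the two growth bounds leaves a remainder that does not vanish with $\|w\|$, so the Gronwall inequality never starts. To make the theorem provable you must either strengthen Assumption~\ref{asm:pde} with a Lipschitz or monotonicity condition on $g$ (which changes the statement), or follow the paper's route: exhibit \eqref{eqn:quasi-linear-defn} as the gradient flow of a functional that is geodesically convex (this is where the convexity hypothesis on $h$ is intended to act) and quote the contraction estimate of Proposition~\ref{pro:contraction}, which yields uniqueness without ever estimating the pointwise difference of the nonlinearities. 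Be aware that the paper's own proof is itself only a sketch of this second route; identifying the functional whose gradient flow is \eqref{eqn:quasi-linear-defn} and verifying its geodesic convexity from Assumption~\ref{asm:pde} is the substantive work either way.
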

\begin{proof}
The uniqueness of the solution to the PDE follows from the uniqueness of the solution to the gradient flow (in Hilbert space $W_2$ under the necessary convexity condition, which is guaranteed by the assumption on $g(t,x,\g,\xi)$.
\end{proof}


\subsection{MFVI: SDE Representation}

In ~\cite{doi:10.1137/17M1162780}, it is pointed out that solution to our quasilinear equation \eqref{eq:quasilineaPDE} can be viewed as the density function of a weak solution to the following stochastic differential equation
\begin{align*}
dX_t = \nabla \nabla \Psi(u(t,(X_t)) dt + dw_t,
\end{align*}
with $u(t,\cdot)$ denotes the density of $X(t)$ at time $t$, and $w_t$ a $d$-dimensional standard Brownian motion. This represents a Mckean-Vlasov process. For details on this type of processes, see, e.g.~\cite{Funaki1984}. The output of MFVI corresponds to the stationary distribution of this stochastic process, and thus completes the picture on representations of MFVI. This deep connection opens the doorway to SDE based solutions, a topic of future thorough investigations. 

\begin{figure}
	\centering
	\includegraphics[width=0.95\linewidth]
	{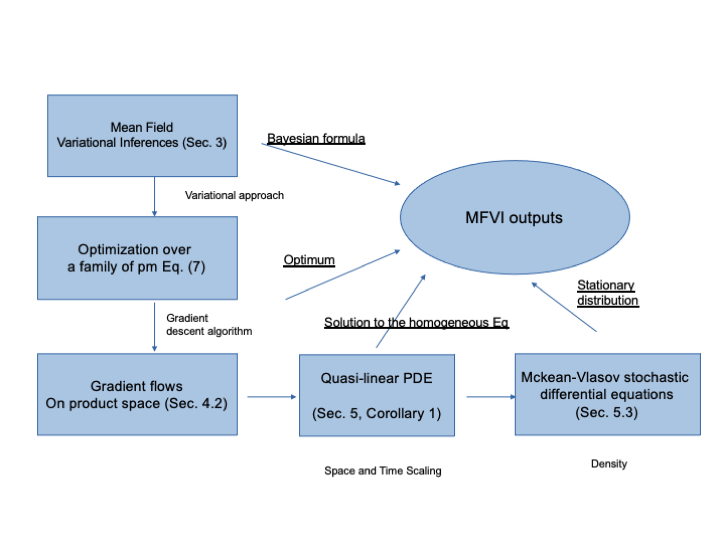}
	\caption{Representations of MFVI} 
	\label{fig:perspective_MFVI}
\end{figure}


\bibliographystyle{abbrv}
\bibliography{hmc}

\newpage

\onecolumn
\begin{appendix}

\section{Background Material} 

\subsection{Relationship between a diffusion process and FPE}
\label{sec:diffusion}

Definitions of diffusion processes can be found in many standard textbooks on stochastic processes and stochastic differential equations, and the following is taken from Sec. 2.5 of~\cite{arnold1974stochastic}, 
\begin{definition}
$X_t, t\in[t_0, T]$, a Markov process with almost certain continuous sample path is called a diffusion process if its transition probability $\pr(s,x,t, B)$ satisfies the following three conditions for all $s\in[t_0, T]$, $x\in\Real^d$ and $\epsilon>0$,
\begin{itemize}
\item
$\lim_{t\ra s}\frac{1}{t-s}\int_{|y-x|>\epsilon}\pr(s, x,t,dy) =0$;
\item
there exists an $\Real^d$-valued function $f(s,x)$ such that,
\begin{align*}
\lim_{t\ra s}\frac{1}{t-s}\int_{{\bar B}_\epsilon(x)}(y-x)\pr(s,x,t,dy) = f(s,x);
\end{align*}
\item
there exists an $\bbS^d$-valued function $B(s,x)$ such that,
\begin{align*}
\lim_{t\ra s}\frac{1}{t-s}\int_{{\bar B}_\epsilon(x)}(y-x)(y-x)^T\pr(s, x,t,dy) = B(s,x),
\end{align*}
\end{itemize}
with $\bbS^d$ denotes the set of $d\times d$ symmetric matrices, $B_\epsilon(x)=\{y \in \Real^d: |y-x|< \epsilon$, and ${\bar B}_\epsilon(x)$ its closure. 
The functions $f(s,x)$ and $B(s,x)$ are called the \emph{drift vector} and \emph{diffusion matrix} of $X_t$, respectively. 
\end{definition}
\begin{remark}
In the definition, the transition probability $\pr(s,x,t, B)$ refers to $\pr[X_t\in B|X_s=x]$, i.e. the probability of $X_t$ in a Borel set $B$ conditioning on $X_s=x$.
\end{remark}
The following classic result establishes the connection between the density function of a diffusion process with respect to the Lebesgue measure and solution to the Fokker-Planck equation. 

\begin{theorem}[Theorem 2.6.9. in~\cite{arnold1974stochastic}] If the derivatives $\frac{\partial}{\partial t} p, \nabla_x f(s,x)$ and $\nabla_x^2 B(s,x)$ exist and continuous, then for each $s\le t$, the density function $p(s,x,t,y)$ is a fundamental solution to the following Kolmogorov forward (Fokker-Planck) equation,
\begin{align}
&\frac{\partial}{\partial t} p+\sum_{i=1}^d \frac{\partial}{\partial y_i}(f_i(t,y) p) -\frac12 \sum_{i=1}^d\sum_{j=1}^d\frac{\partial^2}{\partial y_i\partial y_i}(B_{ij}(t,y) p)=0.\label{eqn:Kforward}
\end{align}
\end{theorem}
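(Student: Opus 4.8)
The plan is to prove this classical identity by a duality argument: test the forward equation~\eqref{eqn:Kforward} against smooth functions and transfer the derivatives through the Chapman--Kolmogorov relation. Fix a test function $g\in C_c^\infty(\Real^d)$ and set $(P_tg)(x):=\int_{\Real^d} g(y)\,p(s,x,t,y)\,dy=\ex[g(X_t)\mid X_s=x]$. The strategy is to differentiate this quantity in $t$, which reduces to controlling the increment $(P_{t+\Delta}g)(x)-(P_tg)(x)$ as $\Delta\downarrow 0$. Using the Markov property $p(s,x,t+\Delta,y)=\int p(s,x,t,z)\,p(t,z,t+\Delta,y)\,dz$ and $\int p(t,z,t+\Delta,y)\,dy=1$, I would write this increment as $\int p(s,x,t,z)\big[\int (g(y)-g(z))\,p(t,z,t+\Delta,y)\,dy\big]\,dz$.

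First I would expand $g(y)-g(z)$ by Taylor's theorem to second order about $z$ and integrate termwise against $p(t,z,t+\Delta,y)\,dy$. The three defining local conditions of a diffusion process are precisely tailored to this expansion: the second condition identifies $\tfrac1\Delta\int (y-z)\,p\,dy\to f(t,z)$, the third identifies $\tfrac1\Delta\int (y-z)(y-z)^T p\,dy\to B(t,z)$, and the first condition forces the mass on $\{|y-z|>\epsilon\}$ — hence the Taylor remainder — to be $o(\Delta)$. Dividing by $\Delta$ and passing to the limit yields
\begin{align*}
\frac{\partial}{\partial t}(P_tg)(x)=\int_{\Real^d}\Big(\sum_{i=1}^d f_i(t,z)\,\partial_{z_i}g(z)+\tfrac12\sum_{i,j=1}^d B_{ij}(t,z)\,\partial^2_{z_iz_j}g(z)\Big)\,p(s,x,t,z)\,dz,
\end{align*}
so the right-hand side displays the backward generator $\mathcal{A}$ acting on $g$ and integrated against $p$.

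The second main step is to move all derivatives off $g$ and onto $p$ by integrating by parts in $z$. Since $g$ has compact support and the hypotheses guarantee that $\partial_t p$, $\nabla_y f$ and $\nabla_y^2 B$ exist and are continuous, the boundary terms vanish and one obtains $\int g(z)\,\partial_t p\,dz=\int g(z)\big(\mathcal{A}^*p\big)\,dz$, where the formal adjoint is $\mathcal{A}^*p=-\sum_i\partial_{z_i}(f_i p)+\tfrac12\sum_{i,j}\partial^2_{z_iz_j}(B_{ij}p)$. Because $g\in C_c^\infty(\Real^d)$ is arbitrary, the integrands agree pointwise, which is exactly~\eqref{eqn:Kforward} after renaming $z$ as $y$. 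Finally, to see that $p$ is a \emph{fundamental} solution I would verify the initial datum $p(s,x,t,\cdot)\rightharpoonup\delta_x$ as $t\downarrow s$: for every $g\in C_c^\infty$, $(P_tg)(x)\to g(x)$ follows from the almost-sure continuity of sample paths together with the first local condition, which concentrates the transition kernel at $x$.

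The main obstacle I expect is the rigorous justification of the limit interchange in the first step — controlling the Taylor remainder uniformly and confirming that the $o(\Delta)$ estimates furnished by the three local conditions survive after the outer integration in $z$ against $p(s,x,t,z)\,dz$. This is precisely where the assumed existence and continuity of $\partial_t p$, $\nabla_y f$ and $\nabla_y^2 B$ are genuinely needed: they supply the dominated-convergence control required to differentiate under the integral sign and to discard the boundary terms in the integration by parts.
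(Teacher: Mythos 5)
You should first know that the paper contains no proof of this statement: it is quoted verbatim (as Theorem 2.6.9) from Arnold's book and used as background, so the only meaningful comparison is with the classical argument. Your route is exactly that classical derivation — differentiate $(P_tg)(x)=\int g(y)\,p(s,x,t,y)\,dy$ in the forward variable via the Chapman--Kolmogorov identity, identify the generator through a second-order Taylor expansion matched to the three defining local conditions of a diffusion, integrate by parts against $g\in C_c^\infty(\Real^d)$, and verify the $\delta_x$ initial datum from condition (i) to get the ``fundamental solution'' claim. The skeleton is correct and is the standard one.

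There are, however, two points where your justification does not go through as written. First, the limit interchange you flag as the main obstacle is misattributed: continuity of $\partial_t p$, $\nabla_y f$ and $\nabla_y^2 B$ does \emph{not} supply the dominated-convergence control needed to pass $\Delta\downarrow 0$ inside the outer integral $\int(\cdot)\,p(s,x,t,z)\,dz$. The pointwise-in-$z$ limits furnished by the three defining conditions can fail to be integrable uniformly; what the classical proofs (including Arnold's setting) actually require is that those three limits hold \emph{uniformly in $z$} on compact sets, together with the truncation estimate from condition (i) to kill the contribution of $z$ far from the support of $g$. That uniformity is an extra hypothesis, not a consequence of the stated smoothness. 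Second, your final step equates integrands pointwise after integration by parts, which requires $\partial_{y_i}(f_i\,p)$ and $\partial^2_{y_iy_j}(B_{ij}\,p)$ to exist and be continuous — i.e., $y$-regularity of $p$ itself, not merely of $f$ and $B$. The hypotheses as transcribed in the paper (derivatives of $f$ and $B$ only) do not grant this; Arnold's actual assumptions are on the composite quantities $\partial(f_ip)/\partial y_i$ and $\partial^2(B_{ij}p)/\partial y_i\partial y_j$, and the paper's rendition appears garbled on this point. Without that regularity your argument proves the forward equation only in the distributional sense, which is weaker than the classical statement being made. Both fixes are routine once stated, but they are genuinely needed and should be made explicit rather than folded into the continuity hypotheses you cite.
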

Equation \eqref{eqn:Kforward} is the Fokker-Planck equation in a general form, and naturally includes the equation corresponding to Bayesian inference, \eqref{eq:fpe}, as a special case. 
This connection in the theorem  facilitates the presentation of VB posterior as the stationary distribution (invariant measure) of the corresponding diffusion process. More formally, it is pointed out in~\cite[Proposition 2.1]{BiacaDogbe2017} that the following statements are equivalent:
\begin{itemize}
\item
$X_t$ is ergodic;
\item
$X_t$ has an invariant probability measure;
\item
There exists a unique probability measure $m$ such that, for any solution $u(x,t)$ to \eqref{eqn:Kforward} with $u_0$ as the initial value, we have, 
\begin{align*}
u(x,t) \ra \int u_0 dm, \quad \hbox{as } t\ra \infty,
\end{align*}
uniformly in $x$. 
\end{itemize}
It is also demonstrated (~\cite[Theorem 3.13]{BiacaDogbe2017}) that under an important assumption of the existence of (strong) Lyapunov function, the diffusion process admits a unique invariant measure. Moreover, the density of the measure is smooth if the coefficients are smooth. As an important example, when the drift term $f(t,x)$ takes the form of 
\begin{align*}
f(t,x) = -\nabla V(x),
\end{align*}
for some (potential) function $V(x)$, 
and $B(t,x)$ being a constant multiplier of the identity matrix, then the diffusion is known as the langevin diffusion, and the invariant measure, Gibbs measure.

\section{Background on Gradient Flows on the Product Wasserstein Space}
\label{sec:gradient_flow_appendix}

In Sec. 4, the product Wasserstein space has been defined specifically for probability measures on Euclidean space, with the purpose of avoiding unnecessary abstraction, as well as connecting more naturally with PDEs and diffusion processes. Here, those definition and concepts are more naturally provided for general metric spaces, with additional materials to form a coherent description of the product Wasserstein spaces.

\subsection{$AC_p$ spaces}

Suppose that, for each $i=1,\ldots, n$. $(X_i, d_i)$ is a (complete and separable) metric space, and $\calp(X_i)$ denotes the space formed by the (Borel) probability measures 
defined on $X_i$. As a metric space itself, $\calp(X_i)$ is equipped with the Wasserstein metric $W_p$. 
\begin{align*}
W_p^p(\mu_1, \mu_2)  :=\inf_\al \int_{X_i \times X_i} d_i(x,y)^p d\al(x,y),
\end{align*}
with the infimum taken over all joint measures $\al \in \calp(X_i \times X_i)$ with marginals $\mu_1$ and $\mu_2$ on the first and second factors.
The metric on $\prod_{i=1}^n \calp^2(X_i)$ is the natural product metric. 

\begin{definition}
The space of $AC_p(a,b;\prod_{i=1}^n \calp^2(X_i))$ consists of maps, $v(s)=(v_i(s))_{i=1}^n$, from $(a,b)$ to $\prod_{i=1}^n \calp^2(X_i)$ such that there exists a $\Real^n$-valued $L^p$ function $m(s)=(m_i(s))_{i=1}^n$ satisfying,
\begin{align*}
W_p(v_i(s), v_i(t))&\le \int_s^t m_i(r) dr, \quad \forall a<s\le t<b, &i=1,\ldots, n.
\end{align*}
\end{definition}
\begin{remark}
From optimal transport point of view, any two points in $\calp(X)$, which are two probability measures on a generic $X$, is connected by a "path", which corresponds to a solution to the continuity equation that produces an optimal transport plan. So these paths play the same role as line segments in finite dimensional Euclidean spaces. Consider just one point in $\calp_2(X)$ and its neighborhood, then the tangent space should be formed by these paths, see the formal definition below.
\end{remark}

\subsection{Product (co)-Tangent bundle}

For any real number $p>1$, the product tangent bundle is defined in  the following manner.
\begin{definition}
For each $\nu=(\nu_1, \ldots, \nu_d) \in \prod_{i=1}^n \calp_i(\Real)$, define, 
\begin{align*}
T_\nu := \prod_{i=1}^d CL_p(\{ (j^i_q(\nabla \phi_i)): \phi=(\phi_1, \ldots, \phi_d) \in C_b^\infty(\Real)\}
\end{align*}
with $j^i_q$ denotes the duality map on $L_p(\nu_i)$, and $CL_p$ denotes the closure under $L_p$.
\end{definition}
\begin{remark}
For $W_2$, it is proved in~\cite{10.2307/25662148} that the tangent cone at each point on the Wassertein space is a Hilbert space, and it is represented as the $L_2$ closure of Lipschitz functions. The definition given here is an extension of Definition 8.4.1 in~\cite{ambrosio2006gradient}, which is based on observations summarized in Theorem 8.3.1 in~\cite{ambrosio2006gradient}. More specifically, for a absolute curve $\mu_i$, corresponding vector field $v_i(t)$, $j_p(v_i)$ belongs to the closure of subspace generated by $\nabla \phi$ with $\phi\in CYL(X)$. $CYL(X)$ refers to the set of cylinder functions ( smooth function whose support lies in a finite dimensional subspace of $X$). Tangent bundle is understandably formed by those absolute continuous curves.

\end{remark}

\subsection{Fr\'echet-Wasserstein subdifferential}

The classic Fr\'echet subdifferential of a functional $\phi$ defined on a Banach space $\calb$, is defined as a subset in the dual space $\calb'$, more specifically, for an element $v\in D(\phi)$, 
\begin{align*}
\xi \in \partial \phi(v) \iff \lim \inf_{w \rightarrow v}
\frac{\phi(w)-[\phi(v)+\langle \xi, w-v\rangle]}{||w-v||_\calb} \ge 0,
\end{align*}
where $\langle \cdot, \cdot \rangle$ represents the duality action, which is reduced to the inner product when $\calb$ is a Hilbert space.

For $\calb$ being $(\prod_{i=1}^n \calp_i(\Real), W_p)$, for the Fr\'echet differential at $\mu$, the dual space is the space of $L_q$ functions with $\mu$, and the displacement $w-v$ will be replaced an optimal transport plan between the two, as introduced in Ch. 10 of~\cite{ambrosio2006gradient}. To distinguish it from the classic Fr\'echet subdifferential, we name it Fr\'echet-Wasserstein subdifferential, and denoted as $\partial \phi(\mu)$. For any $\xi\in L^p(\mu)$, $\xi \in \partial \phi(\mu)$ if 
\begin{align*}
\lim \inf_{\nu \rightarrow \mu}
\frac{\phi(\nu)-[\phi(\mu)+\int_{\Real^d}\langle \xi(x), t^\nu_\mu(x)-x\rangle d \mu(x)]}{W_p(\mu,\nu)} \ge 0,
\end{align*}
where $t^\nu_\mu$ represents the transportation plan from $\mu$ to $\nu$ that solves the minimization defining $W_p(\mu,\nu)$, and $\langle \cdot, \cdot \rangle$ represents $L_p-L_q$ duality action. For the product space, we naturally consider the set of Fr\'echet-Wasserstein subdifferentials for each $i=1, \ldots, d$.

\section{Proofs}
\label{sec:proof}

\subsection{Proof of Lemma \ref{lem:tightness}}
\label{sec:tightness_proof}
\begin{proof}
Recall that, $\rho^k=(\rho^k_{h,1}, \rho^k_{h,2}, \ldots, \rho^k_{h,d})$ is updated as follow, 
\begin{align*}
    \rho_{h,i}^k = \arg\min_{\rho_i \in P(\R)} \left\{ \frac12 W_2(\rho^{k-1}_i, \rho_i)^2 + h J_i^*(\rho_i;\rho_{-i})\right\}. 
\end{align*}
Here, we need to show that the sequence of the probability measures produced by the algorithm is tight (the probabilities of the complement of a compact set can be uniformly bounded). The key is to show that the second moments of this sequence of the probability measures can be uniformly bounded (then the tightness follows naturally from the Markov /Chebyshev inequality). While the argument in ~\cite{jko98} is rather technical, the essence of the proof is the convexity of the objective function. Because the second moment can be bounded by the cumulative square distance ($L_2$ or $W_2$) moving along the path, thus convexity means that this movement is monotone, and the distance can not be bigger than the distance between the initial position and the optimum. Although the space under consideration in of infinite dimension, since we move along a geodesic line, the problem is actually one dimensional.  

The following arguments utilize the cyclic coordinate update algorithm, described in Sec. \ref{sec:mfvi_formulation}, which is a common approach in convex optimization.   
For each $k\ge 1$, we have,
\begin{align*} 
    \frac12 W_2(\rho^{k-1}_{h,1},\rho_{h,1}^k)^2 + hJ_1(\rho_{h,1}^k;\rho_{h, -1}^{k-1}) \leq hJ_1(\rho_{h,1}^{k-1};\rho_{h, -1}^{k-1}),
\end{align*}
thus, 
\begin{align*} 
    \frac12 W_2(\rho^{k-1}_{h,1},\rho_{h,1}^k)^2 \leq hJ_1(\rho_{h,1}^{k-1};\rho_{h, -1}^{k-1})- hJ_1(\rho_{h,1}^k;\rho_{h, -1}^{k-1}).
\end{align*}
In the next step we will have 
\begin{align*} 
    \frac12 W_2(\rho^{k-1}_{h,2},\rho_{h,2}^k)^2 \leq hJ_2(\rho_{h,2}^{k-1};\rho_{h, -2}^{k-1})- hJ_2(\rho_{h,2}^k;\rho_{h, -2}^{k-1}).
\end{align*}
Meanwhile, it is also true that  
\begin{align*}
J_1(\rho_{h,1}^k;\rho_{h, -1}^{k-1})= J_2(\rho_{h,2}^{k-1};\rho_{h, -2}^{k-1}),
\end{align*}
because in the the cyclic coordinate update algorithm, the measure for $\theta_1$ in $\rho_{h, -2}^{k-1}$ has already been updated to $\rho_{h,1}^k$; recall that
\begin{align*}
J_1(\rho_{h,1}^k;\rho_{h, -1}^{k-1}) =& \int_\R \rho^k_{h,1} \left( \int_{\R^{d-1}} -\log p(x,\theta) \prod_{j=2}^d\rho_{h, j}^{k-1}\right) d\theta_1 \\ = & \int_{\R^{d}} -\log p(x,\theta) [\rho^k_{h,1} d\theta_1][\rho^{k-1}_{h,2} d\theta_2] \ldots [\rho^{k-1}_{h,d} d\theta_d].
\end{align*}
\begin{align*}
J_2(\rho_{h,2}^{k-1};\rho_{h, -2}^{k-1}) = & \int_\R \rho^{k-1}_{h,2} \left( \int_{\R^{d-1}} -\log p(x,\theta) [\rho^k_{h,1} d\theta_1]\prod_{j=3}^d\rho_{h, j}^{k-1}\right) d\theta_2 \\ =& \int_{\R^{d}} -\log p(x,\theta) [\rho^k_{h,1} d\theta_1][\rho^{k-1}_{h,2} d\theta_2] \ldots [\rho^{k-1}_{h,d} d\theta_d].
\end{align*}

Continuing with this process, we will have, 
\begin{align*}
\sum_{i=1}^d \frac12 W_2(\rho^{k-1}_{h,i},\rho_{h,i}^k)^2 \le hJ_1(\rho_{h,1}^{k-1};\rho_{h, -1}^{k-1})-hJ_d(\rho_{h,d}^{k};\rho_{h, -d}^{k-1}).
\end{align*}
Again, the cyclic coordinate update algorithm implies that,
\begin{align*}
J_d(\rho_{h,d}^{k};\rho_{h, -d}^{k-1})= J_1(\rho_{h,1}^{k};\rho_{h, -1}^{k}).
\end{align*}
This is seen from the forms:
\begin{align*}
  J_d(\rho_{h,d}^{k};\rho_{h, -d}^{k-1}) =& \int_\R \rho^k_{h,d} \left( \int_{\R^{d-1}} -\log p(x,\theta) \prod_{j=1}^{d-1}\rho_{h, j}^{k}\right) d\theta_d \\ = & \int_{\R^{d}} -\log p(x,\theta) [\rho^k_{h,1} d\theta_1][\rho^{k}_{h,2} d\theta_2] \ldots [\rho^{k}_{h,d} d\theta_d].
\end{align*}
\begin{align*}
J_1(\rho_{h,1}^{k};\rho_{h, -1}^{k})  =& \int_\R \rho^k_{h,1} \left( \int_{\R^{d-1}} -\log p(x,\theta) \prod_{j=2}^{d}\rho_{h, j}^{k}\right) d\theta_1 \\ = & \int_{\R^{d}} -\log p(x,\theta) [\rho^k_{h,1} d\theta_1][\rho^{k}_{h,2} d\theta_2] \ldots [\rho^{k}_{h,d} d\theta_d].
\end{align*}
Thus, telescoping the sums, we get that
\begin{align*}
\sum_{k=1}^N\sum_{i=1}^d \frac12 W_2(\rho^{k-1}_{h,i},\rho_{h,i}^k)^2 \le& hJ_1(\rho_{h,1}^{0};\rho_{h, -1}^{0})-hJ_d(\rho_{h,d}^{N-1};\rho_{h, -d}^{N-1}).
\end{align*}
Since the value of the functional $J$ is bounded, we can conclude that $\sum_{k=1}^N\sum_{i=1}^d \frac12 W_2(\rho^{k-1}_{h,i},\rho_{h,i}^k)^2 $ is uniformly bounded. This also implies that the second moments of the sequence of the measures can be bounded by this quantity, which gives us the desired tightness. 
\end{proof}

\subsection{Modified statement of Corollary 1 with initial value }
In the statement of Corollary 1, for the ease of exposition, the conditions with initial value have been ignored. Here, we provide a more complete statement of the corollary with those conditions included. 

\begin{cor}
\label{cor:density_convergence_complete}
   Suppose that the negative log-likelihood function $-\log \pr (\pmb{x},\theta)$ 
   is $\g$-convex for some $\g>0$. For $i=1,\dots, d$, let $\big\{(\rho^k_{h,i})_{i=1}^d\big\}_{k\geq1}\in L^2$ be the densities associated with the measures produced by the iterative scheme \eqref{opt:vi-euler}, and let $\rho_{h,i}(t)$ be their interpolation of $t\in[0,\infty)$ for each $h,i$. Let $\rho^0 = (\rho^0_1,\rho^0_2, \ldots, \rho^0_d) \in L^2(\R^d)$ such that for each $i=1,2,\ldots, d$, we have, $\rho^0_i(x)\ge 0$, $\int_\Real \rho^0_i(x) dx=1$,   $\int_{\R} x^2 \rho_i^0(x) dx < \infty$ and $J((\rho_1^0(x_1)dx_1, \ldots, \rho_d^0(x_d)dx_d)) < \infty$,
    Then, as $h\downarrow 0$, 
    \[ 
    (\rho_{h,1}(t),\rho_{h,2}(t),\ldots, \rho_{h,d}(t))\rightharpoonup (\rho_1(t), \rho_2(t), \dots, \rho_d(t))\;\;,
    \] 
    weakly in $L^2(\R^d)$ for a. e.  $t\in (0,\infty)$, and $((\rho_1(t), \rho_2(t), \dots, \rho_d(t))\in  C^\infty((0,\infty)\times \R^d)$ is the unique solution of the following equation in its coordinate form, 
    \begin{align*}
    \partial_t \rho_i = \partial_i (\partial_i \Psi_i(x, \rho_{-i}) \rho_i) + \partial^2_i \rho_i, \quad \forall i=1,\ldots,d
    \end{align*}
    with 
    initial conditions .
    \[
     \rho(t) \rightarrow \rho^0 \;\;\; \text{ converges strongly in } L^2(\R) \;\;\; \text{ as } t\downarrow 0.
     \]
\end{cor}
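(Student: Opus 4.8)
The plan is to treat this as the quantitative refinement of Corollary~\ref{cor:density_convergence}: the convergence, uniqueness and smoothness assertions are obtained exactly as before, and the only genuinely new work is to pin down the strong $L^2$ behavior as $t\downarrow 0$. First I would observe that the moment hypotheses $\int_\R x^2\rho^0_i(x)\,dx<\infty$ place each $\rho^0_i$ in $\calp^2_i(\R)$, so $\rho^0=(\rho^0_1,\ldots,\rho^0_d)$ is an admissible initial point of the product Wasserstein space, while $J(\rho^0)<\infty$ makes the initial energy finite. These are precisely the hypotheses that render the telescoped energy-dissipation estimate of Lemma~\ref{lem:tightness} nonvacuous. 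With them in force, Lemma~\ref{lem:tightness} gives tightness (equivalently $L^2$-compactness) of the interpolants $\rho_{h,i}(t)$, Theorem~\ref{pro:gradientflow} identifies the subsequential limit as the unique gradient flow on $\prod_i\calp^2_i(\R)$, and Corollary~\ref{cor:density_convergence} identifies that flow with the unique $C^\infty$ solution of the coupled system~\eqref{eqn:coupled_FP}; this reproduces the weak-$L^2$ convergence for a.e.\ $t$ and the smoothness claim verbatim.

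For the initial time, I would first extract $W_2$-continuity at $t=0$ from the discrete scheme. Since~\eqref{opt:vi-euler} is initialized at $\rho^0$, one has $\rho_{h,i}(0)=\rho^0_i$ for every $h$, and the telescoped bound from the proof of Lemma~\ref{lem:tightness},
\begin{align*}
\sum_{k\ge 1}\tfrac12\,W_2(\rho^{k-1}_{h,i},\rho^{k}_{h,i})^2\;\le\; h\big(J(\rho^0)-\inf J\big),
\end{align*}
combined with the triangle inequality and Cauchy--Schwarz over the first $k\approx t/h$ steps, yields $W_2(\rho_{h,i}(t),\rho^0_i)^2\le C\,t$ uniformly in $h$. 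Passing to the limit, the limiting curve satisfies $W_2(\rho_i(t),\rho^0_i)^2\le C\,t$, so $\rho_i(t)\to\rho^0_i$ in $W_2$, and in particular weakly, as $t\downarrow 0$.

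The upgrade from weak/$W_2$ continuity to \emph{strong} $L^2$ continuity at $t=0$ is the main obstacle, and the cleanest route is to exploit that the limit lies in the energy class of~\cite{GoudonSaad2001}, namely $\rho_i\in C([0,T];L^2)\cap L^2(0,T;W^{1,2})$. Concretely, I would test each equation of~\eqref{eqn:coupled_FP} against $\rho_i$ to obtain the energy identity $\tfrac12\tfrac{d}{dt}\|\rho_i(t)\|^2_{L^2}=-\int(\partial_i\rho_i)^2\,dx-\int \partial_i\Psi_i(x,\rho_{-i})\,\rho_i\,\partial_i\rho_i\,dx$, which shows $t\mapsto\|\rho_i(t)\|^2_{L^2}$ is continuous down to $t=0$ with limiting value $\|\rho^0_i\|^2_{L^2}$. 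Together with the weak convergence $\rho_i(t)\rightharpoonup\rho^0_i$ established above, the convergence of the norms $\|\rho_i(t)\|_{L^2}\to\|\rho^0_i\|_{L^2}$ promotes weak to strong convergence by the standard Hilbert-space criterion. The delicate point---where the $\g$-convexity hypothesis and $J(\rho^0)<\infty$ are essential---is justifying this energy identity for the quasilinear, cross-coupled system: the drift $\partial_i\Psi_i(x,\rho_{-i})$ ties the $i$-th equation to the other marginals, so one must bound the resulting cross terms and guarantee the integrals remain finite down to $t=0$, which is exactly what finite initial energy and the convexity-induced monotonicity of $J$ along the flow provide.
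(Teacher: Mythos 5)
Your overall architecture matches what the paper intends: the weak-$L^2$ convergence, smoothness and uniqueness claims are inherited from Lemma~\ref{lem:tightness}, Theorem~\ref{pro:gradientflow} and Corollary~\ref{cor:density_convergence}, and the listed hypotheses on $\rho^0$ (second moments finite, $J(\rho^0)<\infty$) serve exactly to make $\rho^0$ an admissible point of $\prod_i\calp^2_i(\R)$ with finite initial energy so that the telescoped estimate in the proof of Lemma~\ref{lem:tightness} is nonvacuous. Note, however, that the paper supplies no argument at all for the new clause --- the strong $L^2(\R)$ convergence $\rho(t)\to\rho^0$ as $t\downarrow 0$ --- so everything you write on that point is genuinely additional. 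Your first step there, the bound $W_2(\rho_{h,i}(t),\rho^0_i)^2\le Ct$ obtained from the telescoped dissipation estimate via triangle inequality and Cauchy--Schwarz over the first $\lceil t/h\rceil$ steps, is the standard JKO argument and is correct given $J(\rho^0)<\infty$ and a lower bound on $J$.

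The step that needs repair is the upgrade to strong $L^2$ convergence. You write that $W_2$-convergence gives convergence ``in particular weakly,'' and then invoke the Radon--Riesz criterion (weak convergence plus convergence of norms implies strong convergence). But $W_2(\rho_i(t),\rho^0_i)\to 0$ only yields narrow convergence of the measures, not weak convergence in $L^2(\R)$ of the densities; to identify narrow limits with weak-$L^2$ limits you need a uniform bound $\sup_{0<t\le t_0}\|\rho_i(t)\|_{L^2}<\infty$, and you never establish one. The natural source of such a bound is the very energy identity you propose, $\tfrac12\tfrac{d}{dt}\|\rho_i\|^2_{L^2}=-\|\partial_i\rho_i\|^2_{L^2}-\int\partial_i\Psi_i(x,\rho_{-i})\,\rho_i\,\partial_i\rho_i\,dx$, closed by Young's inequality and Gronwall --- but that requires the drift $\partial_i\Psi_i(x,\rho_{-i})$ to be bounded (or to have controlled growth matched by moment bounds on $\rho_i$), and $\g$-convexity of $-\log\pr(\pmb{x},\theta)$ does \emph{not} give boundedness of its gradient (a quadratic potential is $\g$-convex with unbounded gradient). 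So as written the argument is circular at the point where you need the a priori $L^2$ bound near $t=0$, and the energy identity itself is asserted rather than justified in the energy class. A cleaner route, which you mention in passing, is to place the limit directly in $C([0,T];L^2)\cap L^2(0,T;W^{1,2})$ via the weak-solution framework of~\cite{GoudonSaad2001}; if you take that route the energy-identity computation becomes unnecessary, but you must then verify that the coupled drift $B(t,x,u,\nabla u)=\partial_i\Psi_i(x,\rho_{-i})\partial_i u+\ldots$ satisfies Assumption~\ref{asm:pde}, which again turns on growth control of $\partial_i\Psi_i$ that the stated hypotheses do not obviously provide.
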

\end{appendix}

\end{document}